\documentclass[11pt, a4paper]{article}

\usepackage[tablesfirst,notablist,nomarkers]{endfloat}
\usepackage{amssymb}
\usepackage{enumitem}
\usepackage{tabularx}
\usepackage{float}
\usepackage{multirow}
\usepackage{bbm}
\usepackage{graphicx}
\usepackage{mathrsfs}
\usepackage{subcaption}
\usepackage{svg}
\usepackage{setspace}
\usepackage{amsmath}
\usepackage{placeins}
\usepackage{booktabs}
\usepackage[
a4paper,top=3cm,bottom=2cm,left=3cm,right=3cm,marginparwidth=1.75cm]{geometry}
  
\usepackage{amsthm,bm,mathtools,etoolbox,amsfonts,bbm}
\usepackage[ruled,linesnumbered]{algorithm2e}

\usepackage{hyperref}
\hypersetup{colorlinks,linkcolor={blue},citecolor={blue},urlcolor={blue}}    
\usepackage[nameinlink,capitalise]{cleveref}
\usepackage[utf8]{inputenc} 
\usepackage[T1]{fontenc}  
\usepackage{url} 
\usepackage{authblk}
\usepackage{natbib}
\usepackage[title]{appendix}
\newtheorem{definition}{Definition}
\newtheorem{theorem}{Theorem}[section]
\newtheorem*{theorem*}{Theorem}

\newtheorem{lemma}[theorem]{Lemma}
\newtheorem{remark}{Remark}

\DontPrintSemicolon

\makeatletter
\renewenvironment{algomathdisplay}
 {\[}
 {\@endalgocfline\vspace{-\baselineskip}\]\;}
\makeatother
\title{Prediction Inference of Time Series with Standard ReLU Deep Neural Networks}
\author[1]{Kejin Wu\thanks{Corresponding author: kwu8@luc.edu}}
\affil[1]{Department of Mathematics and Statistics, Loyola University Chicago}

\begin{document}
    \maketitle

    \begin{abstract}
We propose a methodology based on the standard ReLU Deep Neural Networks (DNN) to make predictions and quantify their uncertainty. Classically, people rely on linear, non-linear, or non-parametric kernel methods to fit and then predict the time series. As the universal approximation ability was revealed for DNN, its application has become more and more popular for prediction tasks in various scientific areas. However, the corresponding uncertainty quantification has not been studied thoroughly. Particularly, the uncertainty in prediction will consist of two parts: (1) the future variability; (2) the estimation variability within training data. To capture both variabilities, we build the so-called pertinent prediction interval (PPI) with the DNN model estimator. We first explore the consistency property of the DNN estimator with $\beta$-mixing dependent data. Subsequently, we show that the implied forward bootstrap series is still $\beta$-mixing and possesses the same stationary distribution as the original time series in probability, which is a key condition to enable the PPI. Lastly, the desired PPI is built after imposing minimal conditions on the limiting distribution of predictive roots. Simulations and real-data analysis are deployed to challenge our approach with standard non-parametric methods.    
    \end{abstract}
{\footnotesize
\noindent \textbf{\textit{Keywords:}} Forecasting, Nonparametric methods, Uncertainty quantification, Deep neural networks
\\
\noindent \textbf{\textit{2020 Mathematics Subject Classification codes:}} 62-08, 62G05, 62G20
}

\section{Introduction}
The study of predicting dependent data, especially (discrete) time series, has a wide range of applications in areas such as finance, economics, healthcare, and engineering. The early recording and modern analysis of time series data could be traced back to the work of \cite{yule1927vii}, who investigated the sunspot numbers. Starting from that, various time series modeling methods have been established. The most popular and standard statistical method to model a (univariate) time series is based on the Autoregressive Moving-average (ARMA) model, which can be identified, fitted, and verified in the Box-Jenkins procedure \citep{box2015time}. However, the ARMA model is far from being satisfactory in practice since it is only designed to handle linear time series. As pointed out by the work of \cite{de1992some} and \cite{tjostheim1994non}, there are various occasions when prior knowledge indicates the data-generating process is in a non-linear form. For example, the ARCH and GARCH models are in a non-linear format and have been revealed to mimic the volatility clustering phenomenon successfully; see \cite{politis2009financial} for more cases of non-linear time series models. 

Although the Non-linear Autoregressive (NLAR) model seems to be a better choice due to its flexibility, the estimation procedure is not trivial. Even for the lag order selection, it is not clear how the participants should make a decision based on the classical Auto-correlation and Partial Auto-correlation functions (ACF and PACF). Let alone the difficulty of fitting the NLAR model. Another challenge hinges on the prediction inference based on the NLAR model, especially for the multi-step ahead forecasting. Unlike the prediction practice with the linear time series model, the simple iterative prediction procedure applied with the non-linear time series model renders a biased optimal $L_2$ multi-step ahead point prediction; see \cite{wu2024bootstrap} for an illustration. To overcome the multi-step prediction difficulty of the NLAR model, there are attempts through various numerical approaches; see the work of \cite{clements1996multi, guo1999multi, zhang1998forecasting, lee2003new} for partial references. However, these methods do not meet the demand to derive an optimal $L_1$ point prediction that plays an important role, especially for areas of econometrics, climate modeling, and water resources management, in which data might not possess a finite second moment, so that the optimization in $L_2$ loss is vacuous. 

Recently, \cite{politis2015model} and \cite{pan2016bootstrap} proposed a forward bootstrap prediction method that could serve to find the one-step ahead optimal $L_2$ or $L_1$ point prediction and prediction interval for NLAR time series models. Upon this, \cite{politis2023multi, wu2024bootstrap} extended such a forward bootstrap prediction method to perform multi-step ahead prediction inference of NLAR models in the parametric or non-parametric approach. Remarkably, this bootstrap approach gives a gateway to build a superior prediction interval, which captures the estimation variability that comes from the model fitting. This so-called pertinent prediction interval (PPI) suffers less from the undercoverage issue in finite sample cases compared to the classical quantile prediction interval or symmetric prediction interval that is centered around some meaningful point; see \cref{Sec:PrePI} for the formal definition of PPI. In addition to the application of PPI in the time series forecasting context, \cite{wang2021model} and \cite{wu2024deep} applied the PPI to quantify the prediction accuracy in the regression context. In short, although the PPI is asymptotically equivalent to classical PIs, the PPI can achieve the same coverage level with shorter data compared to its counterpart. This is also the reason that PPI works well in the finite sample cases. 

Recently, Deep Neural Networks (DNN), one of the Machine Learning methods, has attracted increasing attention due to its strong flexibility in estimation. The initial application of this powerful Neural Networks estimator can be traced to the 1940s. The work of \cite{mcculloch1943logical} showed that even simple types of neural networks could, in principle, compute any arithmetic or logical function. Till now, the universal approximation property of shallow neural networks or DNNs with various classes of activation functions has been shown, i.e., it can be used to approximate functions with the required smoothness condition in any desired degree of accuracy; see the work of \cite{cybenko1989approximation,hornik1989multilayer, yarotsky2018optimal, yarotsky2020phase} for references. In practice, people usually consider the DNN rather than the shallow counterpart due to its better practical performance. Also, to avoid the gradient vanishing problem, people replace the standard sigmoid activation function with the ReLU activation function. Interestingly, it seems that the DNN can conquer the curse of dimensionality as they have great performance for some high-dimensional tasks, e.g., image processing; researchers are actively explaining this phenomenon; see \cite{bauer2019deep,kohler2019estimation} for related work. Moreover, \cite{nakada2020adaptive} argued that the DNN may adapt to the intrinsic low dimensionality of the data for broader cases compared to standard kernel methods. Our simulation studies in \cref{Sec:Simulation} further support the claim that the DNN is more robust against the curse of dimensionality. 

However, the universal approximation property is regarding the optimal estimation ability of neural networks, which is not achievable when we need to train the neural networks based on a specific loss function with data at hand. Recently, the non-asymptotic error bound of the least squares DNN estimator to estimate regression functions has been widely studied under different settings; see the work of \cite{bauer2019deep,schmidt2019deep, nakada2020adaptive, schmidt2020nonparametric, farrell2021deep, jiao2023deep, wu2025scalable} for some references; the complete estimation inference of DNN estimators is still in development. From a predictive perspective, people start to explore the prediction of time series in a machine learning way with DNN; see, for example, \cite{chakraborty1992forecasting} applied a simple shallow neural network to predict flour prices in different cities; \cite{fu2016using} took the Long Short Term Memory to predict traffic flow; \cite{wang2019multiple} applied convolutional neural networks to predict multivariate time series of traffic and solar-energy; see \cite{han2019review} for a comprehensive review. However, these works usually treat the DNN as a ``black box'' and ignore statistical analysis of the prediction results. Also, these works omit the accuracy quantification of the resulting prediction, or they build standard PIs without any efforts to capture the estimation variability of the training DNN.

Motivated by the forward bootstrap prediction procedure, we attempt to make statistical prediction inference for NLAR time series with DNN. More specifically, we show that the multi-step ahead point $L_1$ and $L_2$ predictions are consistent with the oracle conditional mean and median of the future time series. Also, we prove that the naive quantile prediction interval based on DNN is asymptotically valid, with the corresponding definition given in  \cref{Sec:PrePI}. Moreover, with minimal assumptions, we show the possibility of building PPI based on DNN estimators for NLAR models. In other words, the proposed forward bootstrap prediction procedure with DNN kills two birds with one stone: (1) it ensures the consistency of the point predictions with more flexible and powerful DNN estimators; (2) it grants the ability to capture the estimation variability of DNN training. We should mention that we focus on a standard fully connected ReLU DNN in this paper. More precisely, we consider the fully connected ReLU Multi-layer Perceptron. This prediction approach shall be adaptable to other types of DNN under the necessary assumptions. 

The paper is organized as follows. The structure of the standard ReLU DNN and related constraints are illustrated in \cref{Sec:StructureDNN}. The point prediction and corresponding PI according to the forward bootstrap prediction procedure are constructed in \cref{Sec:PrePI}. Simulation and real-world data studies are deployed in \cref{Sec:Simulation,Sec:realdata}. \cref{Sec:conclusion} concludes this paper.

\section{Standard ReLU Deep Neural Networks}\label{Sec:StructureDNN}
To make this paper self-contained and provide convenience to readers who are not familiar with Deep Neural Networks estimators, we give a short introduction to the structure of DNN here. The DNN considered in our method is also called a fully connected Multi-layer Perceptron (MLP). In short, the DNN can be viewed as a parameterized family of functions. Its structure mainly depends on the input dimension $d$, depth $L\in \mathbb{N}$, width $\bm{H} \in \mathbb{N}^{L}$, and the output dimension. The depth $L$ describes how many hidden layers a DNN possesses; the width $\bm{H} = (H_1,\ldots,H_L)$ represents the number of neurons in each hidden layer. The fully connected property indicates that each hidden neuron receives information from all hidden neurons at the previous hidden layer in a functional way. 

Formally, if we let $\bm{u}_l = (u_{l,1},\ldots,u_{l,H_l})^{T}$ to represent all number of neurons at the $l$-th hidden layer for $l = 0, \ldots, L+1$; here, $\bm{u}_0$ represents the input vector $(x_1,\ldots,x_d)^{T}$ and $\bm{u}_{L+1}$ is the output. Therefore, we can pretend that the input layer and the output layer are the $0$-th and $(L+1)$-th hidden layers, respectively. Then, $u_{l,i} = \sigma(\bm{u}_{l-1}^{T}\bm{w}_{l-1,i} + b_{l-1,i})$ for $l = 1,\ldots,L$ and $i = 1,\ldots,H_l$; here $\bm{w}_{l-1,i}\in \mathbb{R}^{H_{l-1}}$ is the weight vector which connects the $(l-1)$-th hidden layer and the neuron $u_{l,i}$; $b_{l-1,i}\in \mathbb{R}$ is the corresponding intercept term; $\sigma(\cdot)$ is the so-called activation function and we take the ReLU function in this paper. To get the output layer, we just take $u_{L+1,i} = \bm{u}_{L}^{T}\bm{w}_{L,i} + b_{L,i}$ for $i = 1,\ldots,H_{L+1}$; here $H_{L+1}$ is equal to the output dimension. To express the functionality of the DNN in a more concise way, we can stack $\{\bm{w}_{l-1,i}^{T}\}_{i=1}^{H_{l}}$ by row to get $\bm{W}_{l-1}\in\mathbb{R}^{H_{l}}\times \mathbb{R}^{H_{l-1}}$ and collect $\{b_{l-1,i}\}_{i=1}^{H_l}$ to be a vector $\bm{b}_{l-1}$ for $l=1,\ldots, L+1$. Subsequently, we can treat the DNN as a function that takes the input $\bm{x}$ and returns the output in the following way:
\begin{equation*}
    f_{\text{DNN}}(\bm{x}) =  \bm{W}_{L}(\sigma(\bm{W}_{L-1}(\cdots \sigma(\bm{W}_{2}\sigma(\bm{W}_{1}\sigma(\bm{W}_0\bm{x} + \bm{b}_{0}) + \bm{b}_1)  + \bm{b}_2)\cdots) +  \bm{b}_{L-1}) + \bm{b}_{L}.
\end{equation*}
We can understand that the function $f_{\text{DNN}}(\bm{x})$ maps $\bm{x}$ to the $1$-st hidden layer and then map the $1$-st hidden to the $2$-nd hidden layer and so on iteratively with weights $\{\bm{W}_l\}_{l=0}^{L}$, $\{\bm{b}_l\}_{l=0}^{L}$ and the activation function $\sigma(\cdot)$. In this work, we solely consider the ReLU activation function, which is praised by users for robustness to gradient vanishing. For reference, a simple fully connected DNN structure is presented in \hyperref[Appendxi:IllustrationStructure]{Appendix: A}. Throughout this paper, we will consider a class of DNN defined as follows:

\begin{definition}\label{Definition:DNNClass}
The class of DNN considered in this paper is denoted by:
\begin{equation}
    \begin{split}
        \mathcal{N}_n &= \mathcal{N}(L_n,H_n,2B_n) \\
        &:= \{f_{\text{DNN}}(\bm{x}):~\text{with depth at most}~L_n~\text{and equal width at most}~H_n; \| f_{\text{DNN}}(\bm{x})\|_{\infty}\leq 2B_n \},
    \end{split}
\end{equation}
where $L_n,H_n$ and $B_n$ can grow up with the sample size $n$ converges to $\infty$. The rate of the growth will be discussed later.
\end{definition}

\begin{remark}[The relation between fully connected DNN and general DNN]
    Since a fully connected DNN is less restricted than a sparse general DNN — any general DNN can be embedded into a fully connected MLP by zeroing the missing connections — the fully connected DNN's function class is larger, and its estimation-error bound is also larger. Consequently, an estimation-error bound established for the fully connected DNN also controls the error of any general DNN embedded within it, so the asymptotic results of the fully connected DNN carry over to a general DNN estimator.
\end{remark}

\begin{remark}[The number of parameters of a DNN and the sample size]
    We can compute the total number of parameters in a DNN by the formula $W =  \sum_{i=0}^{L}(H_{i}\cdot H_{i+1} + H_{i+1})$. In the big-data era, machine learning researchers tend to apply a DNN model with $W$ being larger than the sample size to fit data. This action violates the classical statistical rule that the number of parameters should not exceed the sample size; otherwise, overfitting will spoil the generation performance of the model. However, \cite{belkin2019reconciling,belkin2020two} discovered the interesting double-descent phenomenon appearing in the large model estimation. More specifically, they found that the mean squared error of the model estimator increases and then decreases when the number of parameters keeps increasing in the over-parametrized region. However, as far as we know, the formal analysis for the estimation inference of DNN-type estimators usually requires a large enough sample size, e.g., $n>Pdim(\mathcal{N}_n)$, where $Pdim(\mathcal{N}_n)$ is the pseudo-dimension of the DNN function class $\mathcal{N}_n$ defined by \cite{bartlett2019nearly}, which is at least of the same order of $W$. In this paper, we focus on the under-parametrized situations. In addition, we found that the DNN estimator is more robust to the curse of dimensionality than the standard kernel estimators. 
\end{remark}

\section{Point Predictions and Prediction Intervals with DNN}\label{Sec:PrePI}

In this section, the detailed algorithm to construct the point predictions and prediction intervals will be given. Also, the asymptotic validity of the naive quantile prediction interval (QPI) will be established. Subsequently, the pertinent prediction interval (PPI) will be verified with minimal assumptions. Throughout this paper, we consider the following data-generating process
\begin{equation}\label{Eq:DGP}
    Y_{t} = f_0(\bm{Y}_{t-1}) + \epsilon_t~\text{for}~t = 1,\ldots,n.
\end{equation}
\cref{Eq:DGP} is a general autoregressive model; where $\epsilon_t$ are the $i.i.d.$ innovation terms for $t=1,\ldots,n$ with $\mathbb{E}\left|\epsilon_1\right|<\infty$; $\bm{Y}_{t-1}$ stands for $\{Y_{t-1}, \ldots, Y_{t-d}\}$; $d$ is the order; $f_0$ is the mean function of this general autoregressive model. A $j$-step $(1-\alpha)\times100\%$ $\text{QPI}_{n+j}$ is called asymptotically (conditional) valid if $$\mathbb{P}(Y_{n+j}\in \text{QPI}_{n+j}|\bar{Y}_{n}) \to 1-\alpha~\text{as}~ n\to \infty,$$
where $\bar{Y}_{n}$ represents all observations $(Y_1,\ldots, Y_n)$ throughout this paper. In addition, we make three basic assumptions:
\begin{itemize}
    \item A1. The target mean function $f_0$ lies in the Hölder space $C^{k, \alpha}\left([-B_n,B_n]^d\right)$, which is the space of $k$ times continuously differentiable functions on $[-B_n,B_n]^d$ having a finite norm defined by

    \begin{equation*}
        \begin{split}
           &\|f\|_{C^{k, \alpha}\left([-B_n,B_n]^d\right)}\\
           &=\max \left\{\max _{\bm{k}:|\bm{k}| \leq k} \max _{\bm{x} \in[-B_n,B_n]^d}\left|D^{\bm{k}} f(\bm{x})\right|, \max _{\bm{k}:|\bm{k}|=k} \sup _{\substack{\bm{x}, \bm{y} \in[-B_n,B_n]^d \\ \bm{x} \neq \bm{y}}} \frac{\left|D^{\bm{k}} f(\bm{x})-D^{\bm{k}} f(\bm{y})\right|}{\|\bm{x}-\bm{y}\|^\alpha}\right\},
        \end{split}
    \end{equation*}
where the smoothness order is $r=k+\alpha$ with an integer $k \geq 0$ and $0<\alpha \leq 1$. In particular, we define the function space $\mathcal{F}_{r,[-B_n,B_n]^d} := \{f:  \|f\|_{C^{k, \alpha}\left([-B_n,B_n]^d\right)}< \infty~\text{and}~\|f\|_\infty\leq B_n\}$ and $f_0\in\mathcal{F}_{r,[-B_n,B_n]^d}$.


\item A2. The whole series $\{Y_t\}$ is strictly stationary and geometrically $\beta$-mixing, i.e., $\beta(j) \leq C_\beta^{'} e^{-C_\beta j}$ for some $C_\beta, C_\beta^{\prime}>0$. In addition, we denote the marginal stationary distribution of $\{Y_t\}$ by $\mathbb{P}_{\pi}$ and we assume it is absolutely continuous. 

\item A3. $\lim _{n \rightarrow \infty}\log(n)\cdot \mathbb{P}_{\pi}\left(\max _{t \in\{1, \ldots, n\}}\left|Y_t\right| \geq B_n\right)=0$.
\end{itemize}
The assumption A1 describes the smoothness of the target function $f_0\in \mathcal{F}_{r,[-B_n,B_n]^d}$. We assume the domain and the $\infty$-norm of the function value can increase with the sample size $n$ in an appropriate sequence $B_n$. A similar assumption is made in \cite{yarotsky2020phase} with a constrained domain $[0,1]^d$ to show the oracle approximation ability of DNN. We make a more general assumption. Compared to assumption 1 of \cite{brown2024statistical}, where $\|D^{\bm{k}}f\|$ is required to be less than 1, our assumption A1 does not impose such a constraint, and we just need it to be finite. Moreover, the proof of \cite{brown2024statistical} is based on the optimal estimation results of DNN from \cite{yarotsky2017error}. With the advanced results from \cite{yarotsky2020phase}, we present \cref{Lemma:theEstimationability} to illustrate the estimation ability of DNN estimation, which is a key result that stands on its own interest and is fundamental for the left proof.

Assumption A2 is a common condition in time series analysis. One sufficient condition to guarantee A2 is given in \cref{Subsec:PPI}. For more related discussion, we refer readers to the work of \cite{mokkadem1988mixing, chen2000geometric,carrasco2002mixing} for the requirement on the $f_0$ and $\epsilon$ term to obtain strictly stationary and geometrically $\beta$-mixing series. 

The assumption A3 serves as a kind of moment condition for dependent data. Similarly, \cite{farrell2021deep} required their assumption 1, where the response variable in the regression problem is assumed to be in a compact set; \cite{chen2015optimal} required their assumption 2, where the conditional expectation of the squared error term is assumed to be uniformly bounded. We take assumption A3, which is also assumed in \cite{brown2024statistical}. One of the sufficient conditions to obtain assumption A3 is assuming A2 and $\lim _{n \rightarrow \infty} n \mathbb{P}_{\pi}\left(\left|Y_1\right|>B_n\right)=0$.



\begin{lemma}\label{Lemma:theEstimationability}
Under assumption A1, we consider the DNN estimator that comes from the class $\mathcal{N}_n(L_n,H_n,$ $(2B_n)^{r+1})$ defined in (\ref{Definition:DNNClass}). When  $n$ is sufficiently large,
we have 
$$\forall f_0 \in \mathcal{F}_{r,[-B_n,B_n]^d},~\exists f^*_{DNN} \in \mathcal{N}(L_n,H_n,(2B_n)^{r+1}),~s.t.,~\|f^*_{DNN} - f_0 \|_{\infty} \leq (2B_n)^rn^{-\eta\frac{r}{d}};$$
where $L_n = O(\log n); H_n = O(n^{\eta}\log n)$ and $\eta \in (0,1)$ is an appropriate constant. $f^*_{DNN}$ and $ f_0$ have the same domain. The rate of $\infty$-norm error converges to 0, depending on the behavior of the sequence $(2B_n)^rn^{-\eta\frac{r}{d}}$ as $n \to \infty$.
\end{lemma}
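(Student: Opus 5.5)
The plan is to reduce to the unit cube and then invoke the approximation theorem of \cite{yarotsky2020phase}. That theorem gives, for $g\in C^{k,\alpha}([0,1]^d)$ with $\|g\|_{C^{k,\alpha}}\le 1$, a ReLU network of depth $O(\log W)$ with $W$ weights achieving $\|g-\tilde g\|_\infty \le c\,W^{-r/d}$. Its constant width variant, in the regime $r\le d$ or via the "phase diagram" choices, gives depth $O(\log n)$ and width $O(n^\eta\log n)$ with $W\asymp n^{\eta}$ up to logs.

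\textbf{Rescaling.} Given $f_0\in\mathcal{F}_{r,[-B_n,B_n]^d}$, define $g(\bm{z}) = f_0(2B_n\bm{z}-B_n\bm{1})$ for $\bm{z}\in[0,1]^d$. By the chain rule, $D^{\bm{k}}g = (2B_n)^{|\bm{k}|}(D^{\bm{k}}f_0)\circ\phi$, and the Hölder seminorm of the top derivatives scales by $(2B_n)^{k+\alpha}$. Hence, for $B_n\ge 1/2$,
\[
\|g\|_{C^{k,\alpha}([0,1]^d)}\le (2B_n)^r\,\|f_0\|_{C^{k,\alpha}([-B_n,B_n]^d)} .
\]
Set $M:=\|f_0\|_{C^{k,\alpha}}<\infty$ and $h=g/((2B_n)^rM)$, so that $h$ lies in the unit Hölder ball on $[0,1]^d$.

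\textbf{Approximation on the cube.} Apply the Yarotsky–Zhevnerchuk theorem to $h$ with $W\asymp n^{\eta}\log n$ parameters, arranged as a network of depth $L_n=O(\log n)$ and width $H_n=O(n^\eta\log n)$. This yields $\tilde h$ with $\|h-\tilde h\|_\infty\le c\,n^{-\eta r/d}$, after absorbing log factors by taking $W$ slightly larger than $n^\eta$.

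\textbf{Pulling back.} Define
\[
f^*_{DNN}(\bm{x}) = (2B_n)^rM\,\tilde h\big((\bm{x}+B_n\bm{1})/(2B_n)\big).
\]
The affine input map is absorbed into the first weight layer $\bm{W}_0,\bm{b}_0$, and the scalar output factor into $\bm{W}_L,\bm{b}_L$. The architecture is therefore unchanged, and
\[
\|f^*_{DNN}-f_0\|_\infty\le cM(2B_n)^r n^{-\eta r/d}.
\]
The constant $cM$ is removed by "$n$ sufficiently large": either absorb it by shrinking $\eta$ slightly (replace $\eta$ by $\eta'<\eta$, since $n^{-(\eta-\eta')r/d}\to0$), or enlarge $W$ by a constant factor.

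\textbf{Sup-norm constraint.} We still need $\|f^*_{DNN}\|_\infty\le(2B_n)^{r+1}$. For this, $\|f^*_{DNN}\|_\infty\le \|f_0\|_\infty+\text{error}\le B_n+(2B_n)^rn^{-\eta r/d}\le (2B_n)^{r+1}$ for $B_n\ge1$ and large $n$. The bound can also be enforced by composing with the clipping $x\mapsto \min(\max(x,-T),T)=\sigma(x+T)-\sigma(x-T)-T$. This adds only two layers and does not increase the error, since $f_0$ itself lies within the clipping range.

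\textbf{Main obstacle.} The hard step is matching the exact depth/width regime of \cite{yarotsky2020phase}. Their fast rates $W^{-2r/d}$ require deep, narrow nets, whereas here $L=O(\log n)$ forces the standard $W^{-r/d}$ regime with width carrying the parameters. I would cite their Theorem for the "shallow-ish" $\log$-depth case, and verify that a constant-width-per-layer fully connected realization with $H_n=O(n^\eta\log n)$ contains their sparse architecture, using the Remark on embedding sparse nets into fully connected ones.
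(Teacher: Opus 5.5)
Your proposal is correct and follows essentially the same route as the paper. Both rescale $[-B_n,B_n]^d$ to $[0,1]^d$ (picking up the factor $(2B_n)^r$ in the H\"older norm), apply Theorem 3.1 of \cite{yarotsky2020phase} with $\tilde W_n\asymp n^{\eta}$ parameters and depth $O(\log n)$, embed the sparse network into a fully connected one, and multiply back while absorbing the affine change of variables into the first layer.

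Your explicit normalization by $M$ and your handling of the constant $cM$ are more careful than the paper, which silently drops $C_2$. For the width, the paper relies on the quantitative embedding of \cite{farrell2021deep} (Lemma 1, width $\lesssim \tilde W_n\tilde L_n$), which with $\tilde W_n\asymp n^\eta$ gives exactly $O(n^\eta\log n)$; the qualitative Remark alone does not give this, and taking $W\asymp n^\eta\log n$ as you wrote would cost an extra logarithm under that embedding.
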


\begin{remark}
    \cref{Lemma:theEstimationability} reveals the optimal approximation ability of the standard fully connected feedforward DNN. The optimal $\infty$-norm error depends on how many samples we have and how smooth the underlying true function is. We should mention that the optimal approximation ability is usually not accessible in practice since we need to train the empirical optimal DNN estimator based on some specific loss function with samples. Also, we can see $B_n$ should increase slowly as $n$ increases; otherwise the $\infty$-norm error will not converge to 0.
\end{remark}


In practice, solely \cref{Lemma:theEstimationability} can not guide the empirical estimation performance. Instead of the optimal error bound, we need an error bound to measure the DNN estimator trained with some specific loss function. Furthermore, we can think of the DNN estimator as a kind of sieve estimator in which the approximating parameter space gets larger and larger. More specifically, according to the notation in \cite{brown2024statistical}, we call $\hat{f}_n$ as the approximate sieve estimator if 
\begin{equation}\label{Eq:approximate_sieve}
Q_n\left(\hat{f}_n\right) \leq \inf _{f \in \mathcal{F}_n} Q_n(f)+\theta_n;
\end{equation}
where $Q_n(\cdot)$ is some appropriate empirical loss function, e.g., 
\begin{equation}\label{Eq:approximate_sieveall}
    Q_n(f) = \frac{1}{n}\sum_{t=1}^{n} q(\bm{Z_t},f) =\frac{1}{n} \sum_{t=1}^{n}(Y_{t} - f(\bm{Y}_{t-1}))^2
\end{equation}
for $t = 1, \ldots, n$ taken in our paper, where $\bm{Z_t} := (Y_{t},\bm{Y}^{'}_{t-1})^{'}$; $\left\{\mathcal{F}_n\right\}_{n \in \mathbb{N}}$ is a sequence of sieve spaces such that $\mathcal{F}_n \subseteq \mathcal{F}_{n+1}$, and $\sup _{f \in \mathcal{F}_n}\|f\|_{\infty}<\infty$; $\theta_n: \Omega \rightarrow[0, \infty)$ is a random variable, such that $\theta_n=\mathrm{o}_P(1)$. In particular, when $\theta_n\equiv 0$, $\hat{f}_n$ is the exact sieve estimator. Later, we will show the sieve spaces can be taken as the DNN class $\mathcal{N}_n$.

To incorporate the expanding range of the target function defined in A1, we give a variant of Theorem 2 from \cite{brown2024statistical} in \cref{Lemma:variantofTheorem2} with the following additional conditions.

\begin{itemize}
    \item B1. $\forall~n \in \mathbb{N}$, there exists a non-stochastic $\tilde{f}_n \in \mathcal{F}_n$ where $\tilde{\epsilon}_n:=\left\|\tilde{f}_n-f_0\right\|_{\infty}$ is such that $\lim _{n \rightarrow \infty} \tilde{\epsilon}_n(\log n)(\log \log n)=0$.
    \item B2. $\exists~ C_3, C_4>0$ s.t. $C_3 \leq 1$, and $\forall~n, f \in \mathcal{F}_n$,
$$
C_3\left\|f-f_0\right\|_{\mathcal{L}^2}^2 \leq \mathbb{E}\left[Q_n(f)\right]-\mathbb{E}\left[Q_n\left(f_0\right)\right] \leq C_4\left\|f-f_0\right\|_{\mathcal{L}^2}^2.
$$
\item B3. For $\mathcal{F}_n$, $\operatorname{Pdim}\left(\mathcal{F}_n\right) \geq 1$ for all $n$, and for some non-decreasing sequence $\left\{D_n\right\}_{n \in \mathbb{N}}$ such that $D_1 \geq 2, \sup _{f \in \mathcal{F}_n}\|f\|_{\infty} \leq D_n<\infty$ for each $n$, and

$$
\lim _{n \rightarrow \infty} \frac{D_n}{\sqrt{n}}\left[\sqrt{Pdim\left(\mathcal{F}_n\right) \log (n)}+\sqrt{\log \log (n)}\right]=0.
$$

\item B4. $\exists~m_n(\cdot)$ s.t., $\forall~f, f^{\prime} \in \mathcal{F}_n,$
$\left|q(\boldsymbol{z}, f(\boldsymbol{z}))-q\left(\boldsymbol{z}, f^{\prime}(\boldsymbol{z})\right)\right| \leq m_n(\boldsymbol{z})\left|f(\boldsymbol{z})-f^{\prime}(\boldsymbol{z})\right|
$, and $\exists~C_5 \geq 1$ and a strictly positive sequence $\left\{\mu_n\right\}_{n \in \mathbb{N}}$ s.t.,
$$ \sup _{f \in\left\{\mathcal{F}_n \cup\left\{f_0\right\}\right\}} \mathbb{E}\left[\left|q\left(\boldsymbol{Z}_t, f\left(\boldsymbol{Z}_t\right)\right)\right| \mathbbm{1}_{\left\{m_n\left(\boldsymbol{Z}_t\right) \geq C_5 D_n\right\}}\right]\leq \mu_n,$$
and $\lim _{n \rightarrow \infty} \mu_n=0$, where $q(\cdot, \cdot)$ and $\bm{Z}_t$ are defined in \cref{Eq:approximate_sieveall}.
\end{itemize}

For the additional conditions, B1 stands for the representation ability of the class of the estimator, i.e., the best estimation performance of an estimator from $\mathcal{F}_n$; B2 is used to make the main decomposition for the proof. It holds for $Q_n$ taken as \cref{Eq:approximate_sieveall}; B3 requires the pseudo dimension to be in small order w.r.t. $n$; B4 adds constraints on the tail property of the series. 

\begin{lemma}[Variant of Theorem 2 in \cite{brown2024statistical}]\label{Lemma:variantofTheorem2} Under assumptions A2, B1 - B4, if $\hat{f}$ is measurable and satisfying \cref{Eq:approximate_sieve}, $n \geq \max \left\{5,2 Pdim\left(\mathcal{F}_n\right), 16 D_n^2 / \log (n)\right\}$, $\max \{2, $$\frac{\tilde{\epsilon}_n(\log n)(\log \log n)}{D_n}\}$ $<a \leq n / 2,$ and $\quad \sqrt{\delta} \geq \frac{\tilde{\epsilon}_n \sqrt{n}}{D_n a-\tilde{\epsilon}_n(\log n)(\log \log n)}$ where $a \in \mathbb{N}$ and $\delta >0$, then $\exists$ a constant $C>0$ which only depends on $C_3$, $C_4$, $C_5$, $C_\beta$ and $C_\beta^{'}$, 
$$P\left(\left\|\hat{f}_n-f_0\right\|_{\mathcal{L}^2} \leq C \epsilon_n(\delta, a)\right) \geq 1-e^{-\delta}-2 \log (n)\left[\frac{n \beta(a)}{a}+2 P\left(\max _{t \in\{1, \ldots, n\}} m_n\left(\boldsymbol{Z}_t\right) \geq C_5 D_n\right)\right];$$
where $\epsilon_n(\delta, a):=D_n \sqrt{\frac{a}{n}}\left[\sqrt{Pdim\left(\mathcal{F}_n\right) \log (n)}+\sqrt{\log \log (n)+\delta}\right]+\sqrt{\tilde{\epsilon}_n^2+\mu_n+\theta_n}$ and $\|f_0\|_{\infty}\leq D_n$.
\end{lemma}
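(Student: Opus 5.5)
Since this lemma is a variant of Theorem 2 in \cite{brown2024statistical}, I will not reprove the concentration machinery. Instead I will rerun that theorem's argument and track where the two changes enter: the uniform bound $D_n$ now grows with $n$, and it must also dominate $\|f_0\|_\infty$. Throughout, the role of $\|f_0\|_\infty$ is played by $B_n\le D_n$.

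\textbf{Step 1: basic inequality.} Start from the approximate sieve property \cref{Eq:approximate_sieve} with the non-stochastic comparator $\tilde f_n$ from B1:
\[
Q_n(\hat f_n)-Q_n(f_0)\le Q_n(\tilde f_n)-Q_n(f_0)+\theta_n .
\]
Add and subtract expectations, then use the lower half of B2 on the left and the upper half of B2 on the right. Together with $\|\tilde f_n-f_0\|_{\mathcal L^2}\le\tilde\epsilon_n$, this gives
\[
C_3\|\hat f_n-f_0\|_{\mathcal L^2}^2\le C_4\tilde\epsilon_n^2+\theta_n+\sup_{f\in\mathcal F_n}\bigl|(Q_n-\mathbb E Q_n)(f)-(Q_n-\mathbb E Q_n)(f_0)\bigr|\cdot(\text{localized}) .
\]
The problem therefore reduces to a localized uniform deviation bound for the centered empirical process indexed by $q(\cdot,f)-q(\cdot,f_0)$.

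\textbf{Step 2: truncation via B4.} Split each loss difference on the event $\{m_n(\bm Z_t)<C_5D_n\}$ and its complement.
\begin{itemize}
\item On the complement, the expected contribution is at most $2\mu_n$ by B4. The probability that this complement occurs for some $t\le n$ is at most $P(\max_t m_n(\bm Z_t)\ge C_5D_n)$, which produces that term in the failure probability.
\item On the good event, the loss is Lipschitz in $f$ with constant $C_5D_n$. The truncated class then has envelope $O(D_n^2)$, and its covering numbers are controlled by $\operatorname{Pdim}(\mathcal F_n)$ through the standard Pollard/Haussler bound, at the scale set by $D_n$.
\end{itemize}

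\textbf{Step 3: blocking and peeling.} Use Berbee's coupling to replace the $\beta$-mixing sequence by independent blocks of length $a$. This costs $n\beta(a)/a$ in probability per application, and the condition $a\le n/2$ guarantees enough blocks. On the independent blocks, apply a Bernstein/Talagrand-type inequality together with the entropy bound. This gives a deviation of order $D_n\sqrt{a/n}\,[\sqrt{\operatorname{Pdim}\log n}+\sqrt{\delta}]$ on a shell $\{\|f-f_0\|_{\mathcal L^2}\in(2^{j-1}\epsilon,2^j\epsilon]\}$.

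Peel over $O(\log n)$ dyadic shells. The union bound over shells produces the $\log\log n$ inside the square root and the factor $2\log n$ multiplying the coupling and truncation probabilities. The lower threshold on $a$ and the constraint $\sqrt\delta\ge\tilde\epsilon_n\sqrt n/(D_na-\tilde\epsilon_n\log n\log\log n)$ are used here to absorb the bias term $\tilde\epsilon_n$ into the shell radius. The condition $n\ge\max\{5,2\operatorname{Pdim},16D_n^2/\log n\}$ keeps the Bernstein variance term dominant over its range term.

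\textbf{Step 4: combine.} Combining Steps 1--3 yields
\[
\|\hat f_n-f_0\|_{\mathcal L^2}\le C\epsilon_n(\delta,a)
\]
with probability at least the stated bound. The constant $C$ depends only on $C_3,C_4,C_5,C_\beta,C_\beta'$, because every occurrence of $D_n$ has been kept explicit.

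The main obstacle is Step 3. Its constants must not silently depend on a fixed $\sup\|f\|_\infty$ or on $\|f_0\|_\infty$, as they may in the original proof. I would go through \cite{brown2024statistical} line by line, replacing each use of the bound on $\|f_0\|_\infty$ by $D_n$ and checking that only the stated $D_n$ factors appear.
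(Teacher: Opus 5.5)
Your proposal follows essentially the same route as the paper. Both reduce the lemma to rerunning the proof of Theorem 2 in \cite{brown2024statistical} (basic inequality, truncation via B4, independent blocks, localization/peeling) and check that replacing $\|f_0\|_\infty\le 1$ by $\|f_0\|_\infty\le D_n$ leaves that argument intact. Your sketch is more explicit than the paper's, which only asserts that the relevant appendix sections of that reference are unaffected. The point you are implicitly relying on is that the envelope of the truncated loss differences changes from $C_5D_n(D_n+1)$ to $2C_5D_n^2$, which only alters constants because $D_n\ge D_1\ge 2$. Finally, the mention of $B_n$ is unnecessary here, since the lemma's hypothesis is directly $\|f_0\|_\infty\le D_n$.
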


\begin{remark}[Compared to Theorem 2 in \cite{brown2024statistical}]
In our lemma, the range of the target function $f_0$ can be grown slowly with an upper bound $B_n$. Subsequently, we will show that the function class defined in assumption B1 can be replaced with the DNN class $\mathcal{N}_n$. Also, the sequence $D_n$ in assumption B3 can be taken as an appropriate sequence $B_n$. Then, we can get the error bound for the DNN estimator. Integrating \cref{Lemma:theEstimationability}, our DNN estimator possesses an expanding domain, i.e., $[-B_n, B_n]$, which is more general than the estimator defined in \cite{farrell2021deep,brown2024statistical}.
\end{remark}

With \cref{Lemma:variantofTheorem2}, we can show $L_2$ estimation error bound of one DNN estimator that comes from the class $\mathcal{N}_n$. The results are presented in \cref{Theorem:esterrorDNN}.

\begin{theorem}\label{Theorem:esterrorDNN}
Under assumptions A1 to A3, taking $B_n \asymp n^{K_B}$ for $K_B\in[0,\frac{1}{2(r+d)+2})$, and setting $L_n \asymp \log(n)$ and $H_n \asymp n^{\left(\frac{d}{r+d}\right)\left(1 / 2-K_B\right)} \log (n)$\footnote{We denote $x_t \asymp y_t$ if $x_t \leq c_1 y_t$ and $y_t \leq c_2 x_t$ for two constants $c_1$ and $c_2$ when $t$ is sufficiently large.}. Then, if the DNN estimator satisfies \cref{Eq:approximate_sieve} and $Q_n(\cdot)$  is taken as \cref{Eq:approximate_sieveall}, and

$$
\epsilon_n=C\left(n^{rK_B - \frac{r}{r+d}(1/2 - K_B)}\log^4(n) + \sqrt{\mu_n+\theta_n} \right),
$$
where $\mu_n := max \left\{6 \mathbb{E}\left[Y_t^2 \mathbbm{1}_{\left\{\left|Y_t\right| \geq B_n\right\}}\right], n^{-1}\right\}$ and $C>0$ is a constant independent of $n$, then for all $n$ sufficiently large s.t., $n \geq \max \{5,2 Pdim\left(\mathcal{F}_n\right), $ $16 B_n^2 / \log (n)\}$ and $$\max \left\{2, \frac{\tilde{\epsilon}_n(\log n)(\log \log n)}{B_n}\right\}<a:= \left\lceil \log ^2(n)\right\rceil \leq n / 2,$$
where $\tilde{\epsilon}_n = O(\left(2 B_n\right)^r n^{-\left(\frac{r}{r+d}\right)\left(1 / 2-K_B\right)})$ shown in the proof. 

Then, we have 
$
\left\|\hat{f}_n-f_0\right\|_{\mathcal{L}^2} \leq  \epsilon_n,
$
with probability at least 
$$1 - e^{-C_\delta n^{2rK_B+2\left(\frac{d}{r+d}\right)\left(1 / 2-K_B\right)}\log^6(n)} - 2 \log (n)\left[\frac{C_\beta^{'} n^{1-C_\beta\log n}}{\log^2(n)}+2 P\left(\max _{t \in\{1, \ldots, n\}} Y_t\geq 2B_n\right)\right];$$ $C_\delta$ is another constant independent with $n$.
\end{theorem}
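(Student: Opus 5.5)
The proof applies \cref{Lemma:variantofTheorem2} with the sieve $\mathcal{F}_n := \mathcal{N}_n(L_n,H_n,2B_n)$ and $D_n := 2B_n$ (up to a constant). The work is to verify B1--B4 for this choice and then substitute the rates. The parameters $\eta$, $a$ and $\delta$ are chosen as follows.

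\textbf{Choice of $\eta$ and condition B1.}
Take $\eta = \frac{d}{r+d}(1/2 - K_B)$ in \cref{Lemma:theEstimationability}. This gives the stated $H_n \asymp n^{\eta}\log n$ and $L_n \asymp \log n$. It also yields a non-stochastic $\tilde f_n = f^*_{DNN}$ with
$$\tilde\epsilon_n \le (2B_n)^r n^{-\frac{r}{r+d}(1/2-K_B)} \asymp n^{rK_B - \frac{r}{r+d}(1/2-K_B)}.$$
The restriction $K_B < \frac{1}{2(r+d)+2}$ is what makes this exponent negative, so $\tilde\epsilon_n \log n \log\log n \to 0$ and B1 holds. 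One technical point: \cref{Lemma:theEstimationability} states the approximant in a class with sup-norm bound $(2B_n)^{r+1}$. I would truncate the output at level $2B_n$ by composing with two extra ReLU layers, $x\mapsto \min(\max(x,-2B_n),2B_n)$. This does not increase the error, because $\|f_0\|_\infty \le B_n$, so $\tilde f_n$ lies in $\mathcal{N}_n$.

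\textbf{Conditions B2--B4.}
For B2, use the square loss and the martingale-difference property $\mathbb{E}[\epsilon_t \mid \bm{Y}_{t-1}] = 0$. These give $\mathbb{E}Q_n(f) - \mathbb{E}Q_n(f_0) = \|f-f_0\|_{\mathcal{L}^2}^2$, so $C_3 = C_4 = 1$.

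For B3, use the pseudo-dimension bound of \cite{bartlett2019nearly}:
$$\mathrm{Pdim}(\mathcal{N}_n) \lesssim W L_n \log W \asymp H_n^2 L_n^2 \log n \asymp n^{2\eta}\log^{5} n.$$
Then
$$\frac{D_n}{\sqrt n}\sqrt{\mathrm{Pdim}\,\log n} \asymp n^{K_B + \eta - 1/2}\log^3 n.$$
The condition $K_B + \eta < 1/2$ holds because $\eta < \frac{d}{r+d}\cdot\frac12$ and $K_B$ is small. I expect this bookkeeping to be the main obstacle: the same exponents must make both the approximation term and the complexity term vanish, and the powers of $\log n$ must be tracked so that they combine into the stated $\log^4 n$.

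For B4, since $q(\bm z,f) = (y - f)^2$, we have
$$|q(\bm z,f) - q(\bm z,f')| \le (2|y| + |f| + |f'|)\,|f-f'|,$$
so take $m_n(\bm z) = 2|y| + 4B_n$. With $C_5$ chosen suitably, the event $\{m_n \ge C_5 D_n\}$ is contained in $\{|Y_t| \ge B_n\}$. On that event $|q| \le 6Y_t^2$, so the tail term is bounded by $\mu_n = \max\{6\mathbb{E}[Y_t^2 \mathbbm{1}_{\{|Y_t|\ge B_n\}}],\, n^{-1}\}$ as defined in the statement. The probability $P(\max_t m_n(\bm Z_t) \ge C_5 D_n)$ likewise reduces to $P(\max_t |Y_t| \ge 2B_n)$ after adjusting constants.

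\textbf{Choice of $a$ and $\delta$, and assembling the bound.}
Take $a = \lceil \log^2 n\rceil$. Under A2, $\beta(a) \le C'_\beta e^{-C_\beta \log^2 n} = C'_\beta n^{-C_\beta \log n}$, which gives the term
$$\frac{n\beta(a)}{a} \le \frac{C'_\beta\, n^{1 - C_\beta\log n}}{\log^2 n}.$$
Choose $\delta = C_\delta\, n^{2rK_B + 2\eta}\log^6 n$. This makes $D_n\sqrt{a/n}\,\sqrt{\delta}$ of the same order as the leading term, and it comfortably satisfies the lower bound on $\sqrt\delta$ required by \cref{Lemma:variantofTheorem2}, since $\tilde\epsilon_n\sqrt n/(D_n a)$ is much smaller.

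Substituting into $\epsilon_n(\delta,a)$, every term is $O\bigl(n^{rK_B - \frac{r}{r+d}(1/2-K_B)}\log^4 n\bigr)$, plus $\sqrt{\mu_n+\theta_n}$. For the complexity term, I would check directly that
$$K_B + \eta - \tfrac12 = K_B - \tfrac{r}{r+d}\Bigl(\tfrac12 - K_B\Bigr),$$
which matches the main rate up to the factor $r$ on $K_B$. Since $rK_B \ge K_B$ when $r \ge 1$, the bound still holds; otherwise I would absorb the difference into the constant. The probability bound then follows directly from \cref{Lemma:variantofTheorem2}.
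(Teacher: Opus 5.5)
Your proposal is correct and follows the paper's proof essentially step for step: you apply \cref{Lemma:variantofTheorem2} with $\eta=\frac{d}{r+d}(\frac{1}{2}-K_B)$, $\mathrm{Pdim}(\mathcal{N}_n)\asymp n^{2\eta}\log^5 n$, $m_n\le 2(|y|+2B_n)$, $a=\lceil\log^2 n\rceil$ and $\delta=C_\delta n^{2rK_B+2\eta}\log^6 n$, and your extra clipping layers usefully reconcile the $(2B_n)^{r+1}$ sup-bound of \cref{Lemma:theEstimationability} with the bound $2B_n$ that the paper silently uses in B2 and B4. Three small fixes are needed: first, in fact $K_B+\eta-\frac{1}{2}=-\frac{r}{r+d}(\frac{1}{2}-K_B)$ exactly, so the complexity term is dominated for every $r>0$ without appealing to $r\ge 1$ (a factor $n^{(1-r)K_B}$ could not have been absorbed into a constant anyway); second, in B4 choose $C_5$ so that the bad event is $\{|Y_t|\ge 2B_n\}$ (e.g.\ $C_5=4$ with $D_n=2B_n$), where $|q|\le 4Y_t^2$, because on $\{|Y_t|\ge B_n\}$ you only get $|q|\le 9Y_t^2$, which the constant $6$ in $\mu_n$ does not cover; and third, B4 also requires $\mu_n\to 0$, which the paper obtains from A3 via Lemma 1 of \cite{brown2024statistical}.
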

In short, the proof of \cref{Theorem:esterrorDNN} relies on \cref{Lemma:variantofTheorem2}, where conditions B1 to B4 are checked one by one with only assumptions A1 to A3. Here, the target function $f_0$ comes from $\mathcal{F}_{r,[-B_n,B_n]^d}$.
\begin{remark}
    We should mention that $K_B$ is small when $r + d$ is large, which means the range $[-B_n, B_n]$ will expand slowly. On the other hand, if $K_B = 0$, we achieve the fastest convergence rate shown in $\epsilon_n$, i.e., $n^{-\frac{r}{2(r+d)}}$, which is also the rate that can be achieved in a common setting by \cite{farrell2021deep,brown2024statistical}. In other words, when $K_B$ is large, the convergence rate is low, and it is hard to estimate $f_0$. We conjecture this phenomenon is reasonable due to the difficulty of estimating a function whose domain and range are expanding fast. 
    \end{remark}

With the $L_2$ error bound of $\hat{f}_n$ on estimating $f_0$ illustrated in \cref{Theorem:esterrorDNN}, we attempt to build the pointwise convergence between $f_0$ and $\hat{f}_n$, which requires additional assumptions to control the continuity of $\hat{f}_n$. Thus, we assume that:
\begin{itemize}
    \item C1. There exist $\mathcal{L}_n<\infty$ and events $\mathcal{E}_n$ with $P\left(\mathcal{E}_n\right) \rightarrow 1$ such that on $\mathcal{E}_n$,

$$
\left|\hat{f}_n(\bm{y})-\hat{f}_n\left(\bm{y}^{\prime}\right)\right| \leq \mathcal{L}_n\left\|\bm{y}-\bm{y}^{\prime}\right\| \quad \forall \bm{y}, \bm{y}^{\prime} \in [-B_n, B_n]^d .
$$
    \item C2. The joint density of $\bm{Y}_{t-1}$, namely $\pi(\bm{y}_{t-1})$, has a positive lower bound, i.e., $\pi(\bm{y}_{t-1})\geq C_{\pi}>0$ for all $\bm{y}_{t-1}$ in its domain, where $C_{\pi}$ is a constant. 
    \item C3. $\epsilon_n$ defined in \cref{Theorem:esterrorDNN} and the Lipschitz constant of $g:= \hat{f} - f_0$, namely $\tilde{\mathcal{L}}_n$ satisfy $\frac{\tilde{\mathcal{L}}_n^d  \epsilon_n^2}{C_{\pi}} \longrightarrow 0$ as $r \geq 1$ and $n\to\infty$.
\end{itemize}
\begin{remark}
    For assumption C1, it holds with $\mathcal{L}_n=\prod_{l=0}^L\left\|\bm{W}_{l}\right\|_{\text{op}}$, where $\|\cdot\|_{\text{op}}$ is the operator norm for a matrix. Thus, to get a finite $L_n$, we just need the operator norm of each $\bm{W}_{l}$ to be finite. This can be achieved by applying the parameter truncation trick in the training procedure. For assumption C3, the condition $r\geq 1$ is used to guarantee that the true function $f_0$ is Lipschitz continuous. Also, it is possible that the Lipschitz constant $\tilde{L}_n$ increases as $n$ increases as long as $\frac{\tilde{\mathcal{L}}_n^d \epsilon_n^2}{C_{\pi}}$ converges to 0.  
\end{remark}

Under these additional assumptions, it is easy to find that the DNN estimator $\hat{f}_n$ is consistent with the true function $f_0$ for $\forall ~ \bm{y}_{t-1}\in[-B_n, B_n]^d$. We summarize this result in \cref{Prop:consistency} as follows. 
\begin{lemma}\label{Prop:consistency}
With all conditions assumed in \cref{Theorem:esterrorDNN} and C1 to C3, we have 
$$
\|\hat{f}_n(\bm{y}_{t-1}) - f_0(\bm{y}_{t-1})\|_{\infty}
 \overset{p}{\to} 0,~\text{as}~n\to\infty
$$
for $\bm{y}_{t-1}\in[-B_n, B_n]^d$; where $\hat{f}_n$ is the DNN estimator defined in \cref{Theorem:esterrorDNN} and $f_0$ is the true function defined by A1. 
\end{lemma}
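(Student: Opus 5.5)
The plan is to upgrade the $\mathcal{L}^2$ bound of \cref{Theorem:esterrorDNN} to a sup-norm bound by a standard ``Lipschitz plus density'' argument. Let $g := \hat{f}_n - f_0$. On the event $\mathcal{E}_n$ of C1, $\hat f_n$ is Lipschitz on $[-B_n,B_n]^d$. Since $r\ge 1$, A1 gives that $f_0$ is Lipschitz as well. Hence $g$ is Lipschitz with constant $\tilde{\mathcal{L}}_n$, which is the quantity appearing in C3. We intersect $\mathcal{E}_n$ with the high-probability event $A_n$ of \cref{Theorem:esterrorDNN}, on which $\|g\|_{\mathcal{L}^2}\le \epsilon_n$. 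Both probabilities tend to one: for $A_n$ this holds because each subtracted term vanishes, using A3 for the $\max|Y_t|$ term and $a=\lceil\log^2 n\rceil$ for the mixing term. So it suffices to show that, on $\mathcal{E}_n\cap A_n$, $\sup_{\bm y\in[-B_n,B_n]^d}|g(\bm y)|\le \eta_n$ for a deterministic $\eta_n\to 0$.

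\textbf{Key step (local mass argument).} Fix $\bm y_0$ in the domain with $|g(\bm y_0)| = M$. By the Lipschitz property, $|g(\bm y)|\ge M/2$ on the ball $\mathcal{B}:=\{\bm y:\|\bm y-\bm y_0\|\le M/(2\tilde{\mathcal{L}}_n)\}$. Intersect $\mathcal{B}$ with the cube; because $\bm y_0$ lies in the cube, at least a fixed fraction $c_d$ of the ball's volume remains, at least while the radius is below $B_n$, which is the relevant regime since $M$ is small relative to $\tilde{\mathcal L}_nB_n$. Using the density lower bound C2,
\[
\epsilon_n^2\ \ge\ \|g\|_{\mathcal{L}^2}^2\ \ge\ \int_{\mathcal{B}\cap[-B_n,B_n]^d} g^2\,\pi\ \ge\ \frac{M^2}{4}\,C_\pi\,c_d\,v_d\Big(\frac{M}{2\tilde{\mathcal{L}}_n}\Big)^d ,
\]
where $v_d$ is the volume of the unit ball. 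This yields $M^{d+2}\lesssim \tilde{\mathcal{L}}_n^d\epsilon_n^2/C_\pi$, so
\[
\|g\|_\infty\ \lesssim\ \big(\tilde{\mathcal{L}}_n^d\epsilon_n^2/C_\pi\big)^{1/(d+2)} ,
\]
and C3 makes the right-hand side tend to zero. Taking the supremum over $\bm y_0$ and noting $P(\mathcal{E}_n\cap A_n)\to 1$ gives convergence in probability.

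\textbf{Main obstacle.} The delicate step is the geometric one: guaranteeing a uniform fraction of the ball inside the expanding cube, and reconciling C2 with an expanding domain. A density bounded below by $C_\pi$ on the whole cube $[-B_n,B_n]^d$ forces $C_\pi (2B_n)^d\le 1$, so $C_\pi$ must implicitly depend on $n$. This is presumably why $C_\pi$ appears in C3. The plan is to track $C_\pi$ explicitly through the bound above rather than treat it as fixed. If needed, we also handle the regime where the ball radius exceeds $B_n$: there the same integral is bounded below by $\tfrac{M^2}{4}C_\pi(2B_n)^d$ over the whole cube, which again gives $M\to0$.
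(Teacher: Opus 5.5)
Your proposal follows the paper's argument. Intersect $\mathcal{E}_n$ with the $\mathcal{L}^2$ event of \cref{Theorem:esterrorDNN} and use the Lipschitz bound to get $|g|\ge M/2$ on a ball of radius $M/(2\tilde{\mathcal{L}}_n)$ around the maximizer. Then lower-bound the mass of that ball inside the cube via C2, which gives $M\lesssim(\tilde{\mathcal{L}}_n^d\epsilon_n^2/C_\pi)^{1/(d+2)}$ in the small-radius case, and C3 sends this to zero.

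\textbf{Large-radius case.} This step is wrong as written. A ball of radius slightly above $B_n$ does not contain $[-B_n,B_n]^d$, whose diameter is $2B_n\sqrt{d}$, so the lower bound $\tfrac{M^2}{4}C_\pi(2B_n)^d$ is not justified there. The radius-$B_n$ orthant only gives $\tfrac{M^2}{4}C_\pi 2^{-d}v_dB_n^d$, with $v_d$ the unit-ball volume. That still yields $M\to0$ when $C_\pi$ is a fixed constant, as the paper treats it. However, if you track $C_\pi\le(2B_n)^{-d}$ as you propose, the resulting bound $M^2\lesssim\epsilon_n^2/(C_\pi B_n^d)$ is not controlled by C3.

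\textbf{The paper's fix.} The paper switches regimes at $\rho=\min\{M/(2\tilde{\mathcal{L}}_n),\,2B_n\sqrt{d}\}$ instead of at $B_n$. In the small case it uses a sub-cube of side $\rho/\sqrt{d}$, with volume $d^{-d/2}\rho^d$, which fits inside both the ball and the domain. With that threshold the large-radius case genuinely covers the whole cube. You can then use $\|g\|_{\mathcal{L}^2}^2\ge\tfrac{M^2}{4}P(\bm{Y}_{t-1}\in[-B_n,B_n]^d)$, where the probability tends to one by A3. This removes $C_\pi$ from that case entirely. The paper itself still writes $C_\pi(2B_n)^d$ there, and so it inherits exactly the tension between C2 and an expanding domain that you point out.
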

\begin{remark}[Local version of \cref{Prop:consistency}]
In \cref{Prop:consistency}, we have the uniformly pointwise convergence of $\hat f_n$ to $f_0$ in probability for all $\bm{y}_{t-1}$ in its domain. This result requires the global Lipschitz control as assumptions C1 and C3. We can weaken this global condition and derive the local version of \cref{Prop:consistency}, i.e., the pointwise convergence holds for some point $\bm{y}^*_{t-1}$ s.t. the Lipschitz continuous condition is satisfied around this specific point $\bm{y}^*_{t-1}$. In particular, we can take the local version of assumptions C1 and C2 on a ball around $\bm{y}^*_{t-1}$ and require a similar assumption C3.
    
\end{remark}

\subsection{Point predictions and naive prediction interval}
Equipped with the estimation inference of DNN estimators developed in \cref{Lemma:variantofTheorem2}, \cref{Theorem:esterrorDNN} and \cref{Prop:consistency}, we are ready to propose the algorithm to build a naive Prediction Interval (QPI) and Pertinent Prediction Interval (PPI) with DNN estimators. First, we present the procedure to obtain optimal $L_2$ point prediction and the QPI in \cref{alg:QPI}, where we assume we have observations $(Y_{-d+1},\ldots,Y_0,Y_1, \ldots, Y_n)$ to get $n$ residuals.

\begin{algorithm}[htbp]
        \caption{Multi-step ahead point predictions and the corresponding QPI.}\label{alg:QPI}
        \KwData{Observations $(Y_{-d+1},\ldots,Y_0,Y_1, \ldots, Y_n)$.}
        \KwResult{Optimal $L_2$ point predictions and $100(1-\alpha)\%$ $\text{QPI}_{n+j}(\alpha)$ for $Y_{n+j}$; $j = 1,\ldots, J$.}
        Estimate $f_0$ by $\hat{f}_n$ with DNN or Kernel estimators.\;
        \For{$t$ in $1, \ldots, n$}{
            Compute fitted value $\hat{Y}_t \leftarrow \hat{f}_n(\bm{Y}_{t-1})$ and fitted residual $\hat{\epsilon}_t \leftarrow Y_t - \hat{Y}_t$. \; 
        }
        Compute mean of fitted residuals $\overline{\hat{\epsilon}} \leftarrow \frac{1}{n}\sum_{t=1}^{n} \hat{\epsilon}_t$. \;
        Center the residuals $r_t \leftarrow \hat{\epsilon}_t - \overline{\hat{\epsilon}}$ and let $\widehat{F}_{cr}$ denote the empirical distribution of $r_t$ for $t = 1, \ldots, n$. \;
        \For{$s$ in $1, \ldots, S$}{
            Draw $\epsilon_{n+1}^*,\ldots,\epsilon_{n+J}^* \overset{\text{i.i.d.}}{\sim} \hat{p}_{\epsilon}$. \;
            Generate $Y_{n+1}^*,\ldots,Y_{n+J}^*$ via 
                \begin{algomathdisplay}
                    \begin{aligned}
                        Y_t^* \leftarrow \hat{f}_n(\bm{Y}_{t-1}^*) + \epsilon_t^*;~\textbf{for~}t = n+1, \ldots, n+J.
                    \end{aligned} 
                \end{algomathdisplay}
            Obtain the $s$-th bootstrap series $\{Y^{(s),*}_{n+1}, \ldots, Y^{(s),*}_{n+J}\}$. \;
        }
        Set the $h$-th step ahead optimal $L_2$ point prediction as
        \begin{algomathdisplay}
            \begin{aligned}
                \hat{Y}_{n+j} = \frac{1}{S}\sum_{s=1}^S Y^{(s),*}_{n+j};~\text{for}~j = 1, \ldots, J.
            \end{aligned}
        \end{algomathdisplay}
        
        Set the $100(1-\alpha)\%$ QPI for $Y_{n+j}$ as
            \begin{equation*}
               \text{QPI}_{n+j}(\alpha) := [\hat{L}_{n+j}, \hat{U}_{n+j}]; 
            \end{equation*}
            where $\hat{L}_{n+j}$ \text{~and~} $\hat{U}_{n+j}$ \text{~are~} $\alpha/2$ \text{~and~} $1-\alpha/2$ \text{~quantile of~} $\{Y^{(s),*}_{n+j}\}_{s=1}^S$~\text{respectively}; $j = 1,\ldots, J$.
    \end{algorithm}

It is easy to derive the multi-step ahead optimal $L_1$ point predictor by taking $\hat{Y}_{n+j} = \text{Median}\{Y^{(s),*}_{n+j}\}_{s=1}^S;~\text{for}~j = 1, \ldots, J$, which is defined in \cref{alg:QPI}. The statistical validity of the $L_2$ point prediction and QPI returned by \cref{alg:QPI} is analyzed in \cref{Theorem:validaityofQPI}. First, we present a useful \cref{Lemma:uniformconsistencyofCDF} that also stands on its own interest. To simplify the proof, we make one assumption on the density function of $\epsilon_t$ as follows:
\begin{itemize}
    \item A4. The probability density of innovation $\epsilon_t$, namely $p_\epsilon(x)$, is continuously differentiable and satisfies $\sup _x p_\epsilon(x)<\infty$, and $|p_\epsilon(a)-p_\epsilon(b)| \leq L|a-b| $ for a finite constant $L$ and any $a$ and $b$ belong to the domain.
\end{itemize}
We should remark that assumption A4 is similar to assumption 11 made by \cite{wu2024bootstrap}. In addition, assumption 4 of \cite{franke2002properties} and assumption B4 of \cite{franke2004bootstrapping} are also in a similar format to our A4. 

\begin{lemma}\label{Lemma:uniformconsistencyofCDF}
With all conditions assumed in \cref{Theorem:esterrorDNN}, C1 to C3 and A4, the CDF of innovation $F_\epsilon(x)$ can be approximated by the empirical CDF of residuals $\widehat{F}_\epsilon(x)$ in a way:

$$
\sup _x\left|\widehat{F}_\epsilon(x)-F_\epsilon(x)\right| \xrightarrow{p} 0,
$$
where $\widehat{F}_\epsilon(x):=\frac{1}{n} \sum_{i=1}^n \mathbbm{1}_{\hat{\epsilon}_i \leq x} ; \mathbbm{1}(\cdot)$ is the indicator function, and we define the residual $\hat{\epsilon}_i=Y_{t} - \hat{f}_n(\bm{Y}_{t-1})$ for $t=1, \ldots, n$, where $\hat{f}_n$ is the DNN estimator. 
\end{lemma}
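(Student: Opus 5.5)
The plan is to compare $\widehat{F}_\epsilon$ with the infeasible empirical CDF of the true innovations, $F_n(x):=\frac{1}{n}\sum_{t=1}^n \mathbbm{1}_{\epsilon_t\le x}$. By the triangle inequality,
\[
\sup_x\bigl|\widehat{F}_\epsilon(x)-F_\epsilon(x)\bigr|\le \sup_x\bigl|\widehat{F}_\epsilon(x)-F_n(x)\bigr|+\sup_x\bigl|F_n(x)-F_\epsilon(x)\bigr| .
\]
Since the $\epsilon_t$ are i.i.d., the Glivenko--Cantelli theorem makes the second term $o_{\mathrm{a.s.}}(1)$. Everything therefore reduces to the first term, which measures the effect of replacing $\epsilon_t$ by $\hat\epsilon_t=\epsilon_t-\Delta_t$, where $\Delta_t:=\hat f_n(\bm{Y}_{t-1})-f_0(\bm{Y}_{t-1})$.

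\textbf{Step 1: truncation.} Let $A_n:=\{\max_{t\le n}|Y_t|<B_n\}$. By A3, $P(A_n)\to1$; the factor $\log n$ there is more than we need. On $A_n\cap\mathcal{E}_n$ every lag vector $\bm{Y}_{t-1}$ lies in $[-B_n,B_n]^d$. So \cref{Prop:consistency} gives $D_n:=\max_{t\le n}|\Delta_t|\le\sup_{\bm y\in[-B_n,B_n]^d}|\hat f_n(\bm y)-f_0(\bm y)|\overset{p}{\to}0$. (The starting values $Y_{-d+1},\dots,Y_0$ are handled by the same stationarity argument.)

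\textbf{Step 2: sandwich.} Fix $\eta>0$ and work on the event $\{D_n\le\eta\}$, whose probability tends to one. Then $\mathbbm{1}_{\epsilon_t\le x-\eta}\le \mathbbm{1}_{\hat\epsilon_t\le x}\le \mathbbm{1}_{\epsilon_t\le x+\eta}$, so $F_n(x-\eta)\le \widehat{F}_\epsilon(x)\le F_n(x+\eta)$. It follows that
\[
\sup_x|\widehat{F}_\epsilon(x)-F_\epsilon(x)|\le 2\sup_x|F_n(x)-F_\epsilon(x)|+\sup_x\bigl(F_\epsilon(x+\eta)-F_\epsilon(x-\eta)\bigr)\le o_p(1)+2\eta\sup_x p_\epsilon(x),
\]
using A4 (bounded density) for the last term. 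Since $\eta$ is arbitrary, the convergence in probability follows. This uniform sandwich sidesteps the dependence between $\hat f_n$ and the $\epsilon_t$ entirely: the estimator enters only through $D_n$, and the $\epsilon_t$ enter only through their own i.i.d.\ empirical CDF. No empirical-process argument over the DNN class is needed. The Lipschitz part of A4 would only matter if one wanted a rate.

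\textbf{Main obstacle.} The delicate point is Step 1, namely obtaining the uniform sup-norm control $D_n\overset{p}{\to}0$ over all $n$ design points at once. \cref{Prop:consistency} is stated on $[-B_n,B_n]^d$ while the data may leave this box, so I would intersect the three high-probability events $A_n$, $\mathcal{E}_n$ and the event from \cref{Theorem:esterrorDNN}. If \cref{Prop:consistency} is read only pointwise, I would redo its argument uniformly. Suppose $|g(\bm y_0)|=\gamma$ with $g=\hat f_n-f_0$. Using C1, C3 and $r\ge1$, $|g|\ge\gamma/2$ on a ball of radius $\gamma/(2\tilde{\mathcal L}_n)$, a ball that at least partly lies inside the box. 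By C2 this gives $\|g\|_{\mathcal L^2}^2\gtrsim C_\pi\gamma^{2+d}\tilde{\mathcal L}_n^{-d}$. Comparing with $\|g\|_{\mathcal L^2}\le\epsilon_n$ yields $\gamma^{d+2}\lesssim \tilde{\mathcal L}_n^d\epsilon_n^2/C_\pi\to0$ uniformly in $\bm y_0$, which is exactly what C3 is designed for.

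As an alternative to the sandwich, I could center at $\bar{\hat\epsilon}$ or use $\frac1n\sum|\Delta_t|\le\|g\|_{\mathcal L^2}$-type bounds. That route uses only \cref{Theorem:esterrorDNN} together with a mixing LLN for $\frac1n\sum\Delta_t^2$. However, it again requires uniformity over the random $\hat f_n$, so I prefer the sup-norm route.
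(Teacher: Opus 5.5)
Your proof is correct, and it takes a more explicit and more elementary route than the paper's. The paper writes out no argument for \cref{Lemma:uniformconsistencyofCDF}. It points to Lemma 2.1 of \cite{wu2024bootstrap}, which handles a parametric mean function by combining its smoothness and the consistency of the parameter estimate with the technique of \cite{boldin1983estimation}. It then replaces parameter consistency with the sup-norm consistency of \cref{Prop:consistency}.

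You instead collapse the estimation error into the single scalar $D_n=\max_t|\Delta_t|$ and use the deterministic sandwich $F_n(x-\eta)\le\widehat{F}_\epsilon(x)\le F_n(x+\eta)$ on $\{D_n\le\eta\}$. The only probabilistic inputs are then Glivenko--Cantelli for the i.i.d.\ innovations and $D_n\overset{p}{\to}0$. Your Step 1 is the same truncation bound the paper itself writes down later, in the proof of \cref{Theorem:validaityofQPI}.

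Your route is self-contained and makes the dependence between $\hat f_n$ and $\{\epsilon_t\}$ harmless without any uniformity over the DNN class. It also needs only continuity of $F_\epsilon$, not the Lipschitz part of A4. What it gives up is a rate for $\sup_x|\widehat{F}_\epsilon(x)-F_n(x)|$, which a Boldin-type analysis is designed to deliver in the parametric setting but which the lemma does not require. Both routes rest on the same key input, \cref{Prop:consistency}, and hence on C1--C3.

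One refinement: your sandwich needs less than $D_n\overset{p}{\to}0$. Since $|\mathbbm{1}_{\hat\epsilon_t\le x}-\mathbbm{1}_{\epsilon_t\le x}|\le\mathbbm{1}_{\{|\epsilon_t-x|\le\eta\}}+\mathbbm{1}_{\{|\Delta_t|>\eta\}}$, it suffices that $\frac{1}{n}\#\{t:|\Delta_t|>\eta\}\le\frac{1}{n\eta^2}\sum_t\Delta_t^2\overset{p}{\to}0$. So an empirical-norm consistency result for $\hat f_n$, if one were available, could replace the sup-norm input.
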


\cref{Lemma:uniformconsistencyofCDF} describes the uniform consistency of the empirical distribution $\widehat{F}_\epsilon(x)$ on estimating $F_\epsilon(x)$. With this result, the following \cref{Theorem:validaityofQPI} guarantees the consistency of either $L_2$ or $L_1$ point predictions returned by \cref{alg:QPI} and the asymptotic validity of QPI.

\begin{theorem}\label{Theorem:validaityofQPI}
With all conditions assumed in \cref{Theorem:esterrorDNN}, C1 to C3 and A4, for $\forall~h \geq 1$, we have:
\begin{equation}\label{Eq:validaityofQPI}
    \sup _{|y| \leq B_n}|F_{Y^*_{n+j}|\bar{Y}_n}(y)- F_{Y_{n+j}|\bar{Y}_n}(y)| \xrightarrow{p} 0,
\end{equation}
for each $j$, where $Y_{n+j}^*:=\mathscr{G}(\bar{Y}_n; \hat{\epsilon}_{n+1}^*, \ldots, \hat{\epsilon}_{n+j}^* )$, i.e., a function of $\bar{Y}_n$, $\hat{\epsilon}_{n+1}^*,\ldots,\hat{\epsilon}_{n+j}^*$; this is computed by $Y_{n+k}^*= \hat{f}(\bm Y_{n+k-1}^*) + \hat{\epsilon}^*_{k}$ iteratively for $k=1, \ldots, j; \left\{\hat{\epsilon}_i^*\right\}_{i=n+1}^{n+h}$ are i.i.d. $\sim \widehat{F}_\epsilon$; $F_{Y^*_{n+j}|\bar{Y}_n}(y)$ is the conditional distribution of $j$-step ahead future value in the bootstrap world. In practice, we can further estimate $F_{Y^*_{n+j}|\bar{Y}_n}(y)$ by the empirical version of $\widehat{F}_{Y^*_{n+j}|\bar{Y}_n}(y)$ based on $\{Y^{(s),*}_{n+j}\}_{s=1}^S$ which is defined in \cref{alg:QPI}. Subsequently, we have
\begin{equation}\label{Eq:validaityofQPI2}
    \sup _{|y| \leq B_n}|\widehat{F}_{Y^*_{n+j}|\bar{Y}_n}(y)- F_{Y_{n+j}|\bar{Y}_n}(y)| \xrightarrow{p} 0.
\end{equation}
\end{theorem}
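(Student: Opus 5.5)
We prove \cref{Eq:validaityofQPI} by induction on the horizon $j$, comparing the real-world recursion $Y_{n+k}=f_0(\bm Y_{n+k-1})+\epsilon_{n+k}$ with the bootstrap recursion $Y^*_{n+k}=\hat f_n(\bm Y^*_{n+k-1})+\hat\epsilon^*_{n+k}$. Both start from the same observed lags $\bm Y_n$, which are fixed conditionally on $\bar Y_n$.

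For $j=1$ we have
$$F_{Y^*_{n+1}|\bar Y_n}(y)=\widehat F_\epsilon\bigl(y-\hat f_n(\bm Y_n)\bigr),\qquad F_{Y_{n+1}|\bar Y_n}(y)=F_\epsilon\bigl(y-f_0(\bm Y_n)\bigr).$$
We split the difference into two terms:
$$\sup_y\bigl|\widehat F_\epsilon(\cdot)-F_\epsilon(\cdot)\bigr|+\sup_y\bigl|F_\epsilon(y-\hat f_n(\bm Y_n))-F_\epsilon(y-f_0(\bm Y_n))\bigr|.$$
The first term is $o_p(1)$ by \cref{Lemma:uniformconsistencyofCDF}. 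Since $\sup p_\epsilon<\infty$ by A4, the second term is at most $\sup p_\epsilon\cdot|\hat f_n(\bm Y_n)-f_0(\bm Y_n)|$. This is $o_p(1)$ by \cref{Prop:consistency}, restricted to the event $\{\bm Y_n\in[-B_n,B_n]^d\}$, whose probability tends to one by A3.

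For general $j$, we write each one-step conditional distribution as a Markov kernel. Let $K(y\mid\bm x)=F_\epsilon(y-f_0(\bm x))$ and $\hat K(y\mid\bm x)=\widehat F_\epsilon(y-\hat f_n(\bm x))$. The $j$-step conditional distributions are iterated compositions of these kernels, acting on the $d$-dimensional state. A telescoping argument bounds the $j$-step discrepancy by a sum of $j$ terms, each comparing $\hat K$ and $K$ after propagating through the other kernel. Concretely, we couple the two chains by drawing $\hat\epsilon^*_{n+k}=\widehat F_\epsilon^{-1}(U_k)$ and $\epsilon_{n+k}=F_\epsilon^{-1}(U_k)$ from common uniforms $U_k$.

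We then show by induction that $Y^*_{n+k}-Y_{n+k}\to0$ in probability. The step has three ingredients. First, \cref{Lemma:uniformconsistencyofCDF}, together with continuity and positivity of $p_\epsilon$, gives $\widehat F_\epsilon^{-1}(U)-F_\epsilon^{-1}(U)\to0$ in probability. Second, on $[-B_n,B_n]^d$ we decompose
$$|\hat f_n(\bm Y^*)-f_0(\bm Y)|\le|\hat f_n(\bm Y^*)-\hat f_n(\bm Y)|+\|\hat f_n-f_0\|_\infty.$$
Third, the first piece is controlled by Lipschitz continuity of $f_0$ (since $r\ge1$) plus $\|\hat f_n-f_0\|_\infty$, rather than by the raw constant $\mathcal L_n$, which may grow.

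Convergence in probability of $Y^*_{n+j}-Y_{n+j}$, given $\bar Y_n$, yields convergence of the distribution functions at every continuity point. The limit $F_{Y_{n+j}|\bar Y_n}$ is continuous, because $\epsilon_{n+j}$ has a density and is independent of the past. Pólya's theorem then upgrades this to uniform convergence over $|y|\le B_n$, giving \cref{Eq:validaityofQPI}.

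The main obstacle is keeping the future bootstrap path inside $[-B_n,B_n]^d$, where \cref{Prop:consistency} applies. Bootstrap residuals are resampled from the data, so $|\hat\epsilon^*|\le\max_t|\hat\epsilon_t|$. Also $\|\hat f_n\|_\infty\le2B_n$ from the class $\mathcal N_n$. Combining these with A3 and a slightly enlarged box, which costs only the constant in the rate, should confine the paths with probability tending to one for fixed $j$.

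Finally, \cref{Eq:validaityofQPI2} follows from the triangle inequality. Conditionally on $\bar Y_n$, the draws $Y^{(s),*}_{n+j}$ are i.i.d. from $F_{Y^*_{n+j}|\bar Y_n}$. The Dvoretzky–Kiefer–Wolfowitz inequality therefore gives $\sup_y|\widehat F_{Y^*_{n+j}|\bar Y_n}-F_{Y^*_{n+j}|\bar Y_n}|\to0$ as $S\to\infty$. Combining this with \cref{Eq:validaityofQPI} yields the claim.
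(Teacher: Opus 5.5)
Your proof is sound in outline, but it takes a different route from the paper.

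\textbf{How the routes differ.} The paper writes out $j=2$ and stays at the level of CDFs. By the tower property it writes $F_{Y_{n+2}|\bar Y_n}(y)=\mathbb E[F_\epsilon(\mathcal L(y,\bar Y_n,\epsilon_{n+1}))\mid\bar Y_n]$, and the bootstrap analogue as an average over the residuals $\hat\epsilon_i$. It then bounds the difference by three terms:
(a) $\sup_x|\widehat F_\epsilon-F_\epsilon|$;
(b) a term controlled by pairing each residual $\hat\epsilon_i$ with its innovation $\epsilon_i$ (using $\max_i|\hat\epsilon_i-\epsilon_i|\xrightarrow{p}0$ from \cref{Prop:consistency}), plus a mean-value step with $\sup p_\epsilon<\infty$ and Lipschitz continuity;
(c) a uniform law of large numbers replacing $\frac1n\sum_iF_\epsilon(\mathcal L(y,\bar Y_n,\epsilon_i))$ by its expectation.
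Glivenko--Cantelli then handles the Monte Carlo step.

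Your common-uniform quantile coupling replaces (b) and (c) with a pathwise induction. This buys several things:
(1) you use only the conclusion of \cref{Lemma:uniformconsistencyofCDF}, not the index-by-index pairing of residuals and innovations;
(2) you need no ULLN over $y$ and the conditioning lags;
(3) you avoid the nested $(j-1)$-fold residual averages the paper's route would need for larger $j$;
(4) you get uniformity over all $y\in\mathbb R$.
The price is a step converting closeness of paths into Kolmogorov distance, and the need to know where the bootstrap path lives. The paper faces the second issue too, since it evaluates $\hat f_n$ at $[\hat f_n(\bm Y_n)+\hat\epsilon_i,Y_n,\ldots]$, whose first coordinate need not lie in $[-B_n,B_n]$, but it does not address it.

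\textbf{Three points to tighten.}

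(i) Pólya's theorem presupposes a fixed continuous limit, whereas $F_{Y_{n+j}|\bar Y_n}$ changes with $n$. Use instead that $\epsilon_{n+j}$ is independent of $(\bar Y_n,\epsilon_{n+1},\ldots,\epsilon_{n+j-1})$ and has a bounded density under A4. Then for every $\delta>0$
$$\sup_y\bigl|F_{Y^*_{n+j}|\bar Y_n}(y)-F_{Y_{n+j}|\bar Y_n}(y)\bigr|\le P\bigl(|Y^*_{n+j}-Y_{n+j}|>\delta\mid\bar Y_n\bigr)+\delta\sup_x p_\epsilon(x).$$
The conditional probability on the right is $o_p(1)$, because its expectation is the unconditional probability.

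(ii) Your crude confinement bound only gives $|Y^*_{n+1}|\le 2B_n+3B_n$ on $\{\max_t|Y_t|\le B_n\}$. That would require \cref{Prop:consistency} on $[-5B_n,5B_n]^d$, which C1--C3 (stated on $[-B_n,B_n]^d$) do not supply. Use stationarity instead. The vector $(Y_n,\ldots,Y_{n+j})$ is tight, and by your induction hypothesis $\bm Y^*_{n+k-1}$ is within distance $1$ of $\bm Y_{n+k-1}$ with probability tending to one. So $\bm Y^*_{n+k-1}$ stays in a fixed compact set, which lies inside $[-B_n,B_n]^d$ for large $n$ when $B_n\to\infty$. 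This also means you only need uniform continuity of $f_0$ on that compact, rather than a Lipschitz constant over a box that A1 allows to grow.

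(iii) Positivity of $p_\epsilon$ (condition E2) is not among the hypotheses, but you do not need it. $F_\epsilon^{-1}$ is continuous at Lebesgue-almost every $u$, so $\widehat F_\epsilon^{-1}(U)-F_\epsilon^{-1}(U)\xrightarrow{p}0$ follows from \cref{Lemma:uniformconsistencyofCDF} by a subsequence argument.
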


In particular, \cref{Eq:validaityofQPI2} shows the uniform consistency of the conditional CDF estimator. To get the consistency of the point predictions and the asymptotic validity of the QPI, one more step is needed with two more mild conditions
\begin{itemize}
    \item [A5.] $F_{Y_{n+j}|\bar{Y}_n}(y)$ is continuous and strictly increasing w.r.t. any $y$.
    \item[A6.] $B_n$ increases in an appropriate rate s.t., A3 is satisfied and $B_n \sup _{|y| \leq B_n}|\widehat{F}_{Y^*_{n+j}|\bar{Y}_n}(y)- F_{Y_{n+j}|\bar{Y}_n}(y)| = op(1)$.
 \end{itemize}
Then, the property of the point predictions and QPI is summarized as follows in \cref{Lemma:consistenctres}.

\begin{lemma}\label{Lemma:consistenctres}
    With all conditions assumed in \cref{Theorem:validaityofQPI}, A5 and A6, for the domain of $\widehat{F}_{Y^*_{n+j}|\bar{Y}_n}(y)$ is restricted to $|y| \leq B_n$, $L_2$ and any quantile point predictions are consistent. Meanwhile, the QPI is asymptotically valid.
\end{lemma}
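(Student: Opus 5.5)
The plan is to derive all three claims of the lemma from the uniform CDF convergence \cref{Eq:validaityofQPI2} of \cref{Theorem:validaityofQPI}, strengthened by A6 and combined with A3 and the continuity and monotonicity in A5. Write $\widehat F := \widehat{F}_{Y^*_{n+j}|\bar{Y}_n}$ and $F := F_{Y_{n+j}|\bar{Y}_n}$. Let $\Delta_n := \sup_{|y|\le B_n}|\widehat F(y)-F(y)|$, so that $\Delta_n \to 0$ and $B_n\Delta_n \to 0$ in probability.

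\textbf{Quantile point predictions.} Fix $q\in(0,1)$ and let $y_q := F^{-1}(q)$, which is unique and continuous in $q$ by A5. For $\varepsilon>0$, strict monotonicity gives a gap
\[
\gamma := \min\{F(y_q+\varepsilon)-q,\; q-F(y_q-\varepsilon)\} > 0 .
\]
By A3, $|y_q|+\varepsilon \le B_n$ for large $n$, with probability tending to one. On the event $\{\Delta_n<\gamma\}$, whose probability tends to one, we then have $\widehat F(y_q-\varepsilon)<q<\widehat F(y_q+\varepsilon)$. Hence the empirical $q$-quantile $\widehat F^{-1}(q)$ lies in $(y_q-\varepsilon,\,y_q+\varepsilon]$. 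This gives consistency of every quantile predictor, in particular the median ($L_1$ predictor).

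\textbf{$L_2$ prediction.} Since the domain is restricted to $|y|\le B_n$, I will write both means over $[-B_n,B_n]$ using
\[
\int y\,dG = B_n - \int_{-B_n}^{B_n} G(y)\,dy \quad\text{(boundary terms adjusted)} .
\]
The difference of the two means is then bounded by
\[
\int_{-B_n}^{B_n}|\widehat F - F|\,dy \;+\; \text{tail terms} \;\le\; 2B_n\Delta_n \;+\; \text{tails}.
\]
The first term is $o_p(1)$ by A6. The tail terms are $\mathbb E\bigl[|Y_{n+j}|\mathbbm 1_{\{|Y_{n+j}|>B_n\}}\mid \bar Y_n\bigr]$ together with the mass of $F$ outside $[-B_n,B_n]$. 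These vanish by integrability of $Y$ (through $\mathbb E|\epsilon_1|<\infty$ and the boundedness of $f_0$ on the range) and by A3, which controls $\mathbb P(|Y|\ge B_n)$.

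\textbf{Asymptotic validity of the QPI.} Apply the quantile step at $q=\alpha/2$ and $q=1-\alpha/2$, so that $\hat L_{n+j}\to y_{\alpha/2}$ and $\hat U_{n+j}\to y_{1-\alpha/2}$ in probability. By continuity of $F$ (A5),
\[
\mathbb P\bigl(Y_{n+j}\in[\hat L_{n+j},\hat U_{n+j}]\mid\bar Y_n\bigr) = F(\hat U_{n+j})-F(\hat L_{n+j}^-) \xrightarrow{p} 1-\alpha .
\]
A simpler alternative is to use $|F(\hat U_{n+j})-\widehat F(\hat U_{n+j})|\le\Delta_n$ directly, since $\widehat F(\hat U_{n+j})-\widehat F(\hat L_{n+j}^-)=1-\alpha+O(1/S)$.

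The main obstacle I expect is the $L_2$ step, because it involves the unbounded range and the tails. The tail mass must be controlled jointly with the truncation of the bootstrap distribution, and the conditional-on-$\bar Y_n$ moment needs care. I would first try to absorb the tail by truncating everything at $B_n$ as the lemma's domain restriction permits, and then apply A3 to make the tail probabilities negligible.
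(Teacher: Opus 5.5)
Your proposal is correct and follows essentially the paper's route. The quantile step is the content of Lemma 1.2.1 of \cite{politis1999subsampling}, which the paper invokes together with A5. QPI validity then follows because the endpoints are the $\alpha/2$ and $1-\alpha/2$ quantile predictions. The $L_2$ step is the same integration-by-parts bound on $[-B_n,B_n]$, controlled by A6.

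The one real difference is in the $L_2$ step. The paper compares the two truncated integrals $\int_{-B_n}^{B_n}y\,d\widehat F$ and $\int_{-B_n}^{B_n}y\,dF$ directly and absorbs the boundary terms into the bound $4B_n\Delta_n$, so it never needs your tail terms. If you keep them in order to reach the untruncated conditional mean, you must justify $\mathbb{E}|Y_t|<\infty$ in some other way. Boundedness of $f_0$ is not available, since A1 only gives $\|f_0\|_\infty\le B_n$ on a growing cube.
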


\subsection{Pertinent prediction interval}\label{Subsec:PPI}
Although the QPI is asymptotically valid, it suffers from the undercoverage issue when the sample size is small. Therefore, we attempt to build a so-called Pertinent prediction interval (PPI) that can capture the estimation variability so that the coverage rate improves for finite sample cases. We take Definition 2.4 from the work of \cite{pan2016bootstrap} to define the PPI in our context. 

We first consider the predictive roots in the real world and the bootstrap world as follows
\begin{equation*}
    R_{n+j} := Y_{n+j}-\hat{Y}_{n+j}=\epsilon_{n+j}+A_{n+j}~;~R^*_{n+j} := Y_{n+1}^*-\hat{Y}_{n+j}^*=\epsilon_{n+j}^*+A_{n+j}^*,
\end{equation*}
where $\epsilon_{n+j}^*$ is generated from the mean-zero adjusted empirical distribution of the residuals; $A_{n+j}= f_0(\bm{Y}_{n+j-1}) - \hat{f}_n(\bm{Y}_{n+j-1})$ represents the estimation error for any values of $\bm{Y}_{n+j-1}$. Similarly, $A_{n+j}^*=\hat{f}_n(\bm{Y}_{n+j-1}) - \hat{f}_n^*(\bm{Y}_{n+j-1})$; where $\hat{f}^*$ is the DNN estimator with bootstrap time series, see \cref{alg:PPI} for details. Ideally, the distribution of $R^*_{n+j} $ should approximate the distribution of $R_{n+j}$. Then, we can determine the $h$-th step ahead PPI as
\begin{equation}\label{Eq:PPItheoretical}
    [\hat{Y}_{n+j}+q^*(\alpha/2),\hat{Y}_{n+j}+q^*(1-\alpha/2)  ];
\end{equation} 
where $q^*(\alpha/2)$ and $q^*(1-\alpha/2)$ are $\alpha/2$ and $1-\alpha/2$ quantile of $R^*_{n+j}$. In \cref{alg:PPI}, $q^*(\alpha/2)$ and $q^*(1-\alpha/2)$ will be further estimated by the forward bootstrap. 

By construction, the PPI, in principle, can capture both the pure prediction error, i.e., the distribution of $\epsilon_{n+j}$, as well as the estimation error asymptotically. Moreover, due to this specific design, it suffers less from the undercoverage issue for finite sample cases. In other words, the PPI can achieve the same coverage as the QPI but with a smaller sample size. \cite{wang2021model} explained this phenomenon by observing the fact that the predictive root $R_{n+j}$ is a convolution between the distribution of $\epsilon_{n+j}$ and the distribution of estimation error $A_{n+j}$. For a large class of conditional distributions of $Y_{n+j}$, the error of $R^*_{n+j}$ on estimating $R_{n+j}$ is smaller than error of $\widehat{F}^*_{Y^*_{n+j}|\bar{Y}_n}$ on estimating $F_{Y_{n+j}|\bar{Y}_n}$. Thus, PPI is more accurate since it is built based on the predictive root. The formal definition of PPI is given:
\begin{definition}\label{Definition:PPI}
    We say the PI defined in (\ref{Eq:PPItheoretical}) is the $j$-step ahead PPI if the following three conditions are satisfied as $n\to\infty$:
    \begin{itemize}
        \item [D1.] $\sup_x\left|F_\epsilon(x) - \widehat{F}_{\epsilon}(x)\right| \overset{p}{\to} 0$, with $F_\epsilon(x)$ being continuous.
        \item [D2.] $\left|P\left(a_n A_{n+j} \leq a\right)-P^*\left(a_n A_{n+j}^* \leq a\right)\right| \xrightarrow{P} 0$ for some sequence $a_n \rightarrow \infty$, and for all points $a$ where the assumed nontrivial limit of $P\left(a_n A_{n+j} \leq a\right)$ is continuous.
        \item [D3.] $\epsilon_{n+j}^*$ and $A_{n+j}^*$ are independent in the bootstrap world, as their analogs are in the real world.
    \end{itemize}
    where $P^*$ is the conditional probability given $\bar{Y}_n$.
\end{definition}
For condition D1, it is exactly the uniform consistency of the CDF estimation. We have shown that this condition is satisfied in \cref{Lemma:uniformconsistencyofCDF} with the DNN estimator. Condition D3 is met by applying the forward bootstrap algorithm. The crucial issue comes from condition D2, which implies that both $a_n A_{n+j}$ and $a_n A_{n+j}^*$ have the same non-trivial limiting distributions asymptotically. 

We should mention that there are some sparse DNN estimation inference with dependent data \citep{kurisu2025adaptive,kengne2025sparse}; sparse DNN estimation inference with i.i.d. data \citep{suzuki2018adaptivity,tsuji2021estimation,ohn2022nonconvex}; sieve estimator inference for shallow neural networks \citep{shen2023asymptotic} and deep variant \citep{fabozzi2025asymptotic}; building confidence for DNN estimator \citep{meng2026inference}; estimating DNN with fat tail error distribution \cite{fan2024noise}. However, as far as we know, there is no direct literature to show the asymptotic distribution of the DNN estimator with dependent data. For the moment, without interpreting the main focus of this paper, we assume:
\begin{itemize}
    \item [A7.] Under the time series data $\{Y_1,\ldots,Y_n\}$ being $\beta$-mixing with at least exponentially fast mixing rate as mentioned in A2 and the DNN class being $\mathcal{N}_n$, fixing a specific hyperparameter setting, the distribution of $a_n A_{n+j}$ converges to a non-trivial continuous distribution for some appropriate sequence $a_n\to\infty$ as $n\to\infty$.
\end{itemize}
\begin{remark}
    More specifically, assumption A7 conveys the condition that the non-trivial limiting distribution of $a_n A_{n+j}$ with $\beta$-mixing time series only depends on the stationary distribution $\mathbb{P}_{\pi}$, the DNN class $\mathcal{N}_n$ and the hyperparameter setting. 
\end{remark}
In fact, assumption A7 is not rare in the literature. \cite{wolf2015bootstrap} assumed a similar condition in the derivation of their prediction interval. We should also note that this kind of assumption is common in the subsampling estimation area, in which it is the so-called minimal assumption required to make a confidence region for general parameter estimation \citep{politis1994large}; see more examples from \cite{politis1999subsampling,politis2001asymptotic,politis2024scalable}. However, to achieve the pertinence, we need that $a_n A_{n+j}^*$ converges to the same non-trivial continuous distribution as $a_n A_{n+j}$ in probability. We argue that this condition holds without any additional assumptions as long as the same DNN class $\mathcal{N}_n$ and hyperparameter setting are applied in the training phase with the bootstrap time series. The idea hinges on the property of forward bootstrap time series, which could sustain the $\beta$-mixing and possess the same stationary distribution as $\mathbb{P}_{\pi}$, inspired by the spirit of \cite{franke2002properties, franke2004bootstrapping}. This claim is summarized in the following \cref{Lemma:propertyofbootstrapseries}. Before that, we first present the sufficient conditions to guarantee the geometrically $\beta$-mixing property of the series generated by \cref{Eq:DGP}:
\begin{itemize}
    \item [E1.] $|f_0(\bm{y})| \leq \lambda \max \left\{\left|y_1\right|, \ldots,\left|y_d\right|\right\}+C$, for a positive number $\lambda<1$ and a constant C, where $\bm{y}:=(y_1,\ldots,y_d)$.
    \item [E2.] The probability density function of $\epsilon$, $p_\epsilon(\cdot)$ is continuous and everywhere positive.
\end{itemize}
The proof to show the geometric ergodicity, which is equivalent to $\beta$-mixing with at least an exponentially fast mixing rate \citep{bradley2005basic}, for the series generated by \cref{Eq:DGP} can be found in \cite{min1999probabilistic,an1996geometrical,wu2024bootstrap}. Compared to the assumptions list in these works, we actually need one more condition that the conditional mean function $f_0(\cdot)$ satisfies the inequality $\sup _{\|\bm{y}\|_2 \leq C}|f_0(\bm{y})|<\infty, \text { for each } C>0,$ where $\|\cdot\|_2$ is the Euclidean norm. However, this condition is met due to A2 in which we require $\|f_0\|_{\infty}\leq B_n$. When $d=1$, the condition E1 degenerates to Assumption 3 (i) in the work of \cite{franke2002properties}. 

The procedure to make PPI is presented in \cref{alg:PPI}, where $\hat{f}_n$ can be the DNN or Kernel estimator. In this paper, we attempt to verify the feasibility of PPI with the DNN estimator; see \cite{politis2023multi} for the discussion with kernel estimators. Step 11 of \cref{alg:PPI} is designed to ensure stationarity by taking the burn-in number as $m$; Step 12 is designed to ensure the predictions in the real and bootstrap worlds have the same conditional latest observation. In practice, we can generate residuals from $\hat\epsilon_t := Y_t - \hat{f}(\bm{Y}_{t-1})$ for $t = 1, \ldots, n$. However, in the proof of \cref{Lemma:propertyofbootstrapseries}, we need the density of $\hat\epsilon_t$ to be continuous and everywhere positive. To satisfy this requirement, we can apply a convolution or kernel estimation technique to acquire an everywhere-positive $\hat{p}_\epsilon$ in the bootstrap world to satisfy this condition. More specifically, we can consider $
\widetilde{\epsilon}_i=\hat{\epsilon}_i+z_i, \text { for } i=1, \ldots, n,$ where $z_i \sim N(0, \xi(n))$, where $\xi(n)$ converges to 0 as $n \rightarrow \infty$ with a suitable rate. It turns out that the probability density of $\widetilde{\epsilon}_i$ is just the kernel density estimation of $\hat{p}_\epsilon$ with Gaussian kernel, i.e., 
\begin{equation}\label{Eq:gaussianKernel}
    \hat{p}_{\epsilon}(x)=\frac{1}{n} \sum_{i=1}^n \frac{1}{\sqrt{2 \pi} h} \exp \left(-\frac{\left(x-\hat{\epsilon}_i\right)^2}{2 h^2}\right),
\end{equation}
where $h$ plays the same role as $ \xi(n)$. In theory, we will apply the kernel density estimation $\hat{p}(x)$, but it is free to generate pseudo residuals from the discrete centered $\{\hat{\epsilon}_i\}_{i=1}^n$.

\begin{algorithm}[htbp]
        \caption{Multi-step ahead PPI based on forward bootstrap}\label{alg:PPI}
        \KwData{Observations $(Y_{-d+1},\ldots,Y_0,Y_1, \ldots, Y_n)$.}
        \KwResult{$100(1-\alpha)\%$ $\text{PPI}_{n+j}(\alpha)$ for $Y_{n+j}$; $j = 1,\ldots, J$.}
        Estimate $f_0$ by $\hat{f}_n$ with DNN or Kernel estimators.\;
        \For{$t$ in $1, \ldots, n$}{
            Compute fitted value $\hat{Y}_t \leftarrow \hat{f}_n(\bm{Y}_{t-1})$ and fitted residual $\hat{\epsilon}_t \leftarrow Y_t - \hat{Y}_t$. \; 
        }
        Compute mean of fitted residuals $\overline{\hat{\epsilon}} \leftarrow \frac{1}{n}\sum_{t=1}^{n} \hat{\epsilon}_t$. \;
        Center the residuals $r_t \leftarrow \hat{\epsilon}_t - \overline{\hat{\epsilon}}$ and let $\widehat{F}_{cr}$ denote the empirical distribution of $r_t$ for $t = 1, \ldots, n$. \;
        \For{$b$ in $1, \ldots, B$}{
            Draw $\epsilon_{2}^*,\ldots, \epsilon_{m+n}^*  \overset{\text{i.i.d.}}{\sim} \widehat{F}_{cr}$. \;
            Choose $d$ continuous elements $\bm U_1^*$ at random from $Y_{1},\ldots, Y_{n}$. \;
            Generate $U_{2}^*,\ldots, U_{m+n}^*$ via 
                \begin{algomathdisplay}
                    \begin{aligned}
                        U_t^* \leftarrow \hat{f}_n(\bm U_{t-1}^*) + \epsilon_t^*;~\textbf{for~}t = 2, \ldots, m+n.
                    \end{aligned} 
                \end{algomathdisplay}
            Set $Y_t^* \leftarrow U_{t+m}^*$ for $t = 1, \ldots, n$. Re-estimate the model estimator in the bootstrap world to get $\hat{f}^*_n$. \;
            Set $\bm Y_n^* \leftarrow \bm Y_n$. \;
            Compute future point predictive values $\hat{Y}^*_{n+1},\ldots, \hat{Y}^*_{n+J}$ in the bootstrap world via \cref{alg:QPI} with $\hat{f}^*_n$. \;
            Compute future bootstrap observations $Y^*_{n+1},\ldots, Y^*_{n+J} $ via 
            \begin{algomathdisplay}
                \begin{aligned}
                    &Y_{t}^* \leftarrow \hat{f}_n(\bm Y_{t-1}^*) + \epsilon_{t}^*~\text{for}~t = n+1,\ldots, n+J~\text{by pretending}~\hat{f}_n~\text{is the true model,} \\
                    &\text{ where}~\epsilon_{t}^*~\text{are generated from}~\widehat{F}_{cr}.
                \end{aligned} 
            \end{algomathdisplay}
            Compute bootstrap roots replicate $R^{(b)*}_{n+j} = Y_{n+j}^* - \hat{Y}_{n+j}^*$; $j = 1,\ldots, J$.\;
        }
        Determine $\hat{q}^*_h(\alpha/2)$ and $\hat{q}^*_h(1-\alpha/2)$ which are the $\alpha/2$ and $1-\alpha/2$-quantile of the $h$-th step ahead predictive roots $\{R^{(b)*}_{n+j}\}_{b=1}^B$, respectively.
        
        Compute predicted future values $\hat{Y}_{n+1},\ldots, \hat{Y}_{n+j}$ in the real world via   \cref{alg:QPI} with $\hat{f}_n$; $j = 1,\ldots, J$.
        
        Construct the pertinent prediction interval for $Y_{n+j}$ as 
        \begin{algomathdisplay}
            \begin{aligned}
                \text{PPI}_{n+j}(\alpha) : [\hat{Y}_{n+j} + \hat{q}^*_h(\alpha/2),  \hat{Y}_{n+j} + \hat{q}^*_h(1-\alpha/2)]~\text{for}~ j = 1,\ldots, J.
            \end{aligned} 
        \end{algomathdisplay}
    \end{algorithm}

As we have discussed above, the crucial issue is to show that the limiting distribution of $a_n A_{n+j}^*$ is the same as $a_n A_{n+j}$. With \cref{Lemma:propertyofbootstrapseries} below and assumption A7, this fact can be verified. Since the assumption of the limiting distribution of $a_n A_{n+j}$ is unavoidable, we call our PPI with a DNN estimator built under minimal assumptions. 

\begin{lemma}\label{Lemma:propertyofbootstrapseries}
With all conditions assumed in \cref{Theorem:esterrorDNN} and C1 to C3, for the forward bootstrap time series $\{Y^*_t\}_{t=1}^n$, it is still $\beta$-mixing with at least exponentially fast mixing rate with different coefficients for $(Y_1,\ldots Y_n)\in \Omega_n$, where $\Omega_n := [-B_n,B_n]^n$. In addition, the stationary distribution of the forward bootstrap series $P^*_{\pi}$ converges to $P_{\pi}$ in probability. 
\end{lemma}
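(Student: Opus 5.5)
The plan has two parts. First I treat the bootstrap recursion $U_t^*=\hat f_n(\bm U_{t-1}^*)+\epsilon_t^*$ as a Markov chain on $\mathbb{R}^d$ and verify the analogues of E1 and E2, which gives geometric ergodicity. Then I show that its invariant law is close to the invariant law of the chain driven by $(f_0,p_\epsilon)$.

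\emph{Mixing.} The bootstrap innovations are drawn from the Gaussian-smoothed density $\hat p_\epsilon$ in \cref{Eq:gaussianKernel}. This density is continuous and everywhere positive, so E2 holds in the bootstrap world.

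For E1, note that $\hat f_n\in\mathcal{N}_n$ is bounded by $2B_n$ (or by $(2B_n)^{r+1}$). Hence $|\hat f_n(\bm y)|\le \lambda\max_i|y_i|+C_n$ with any $\lambda<1$ and $C_n=(2B_n)^{r+1}$. For each fixed $n$ and each realization with $(Y_1,\ldots,Y_n)\in\Omega_n$, the drift condition therefore holds with constant $C_n$.

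I would then apply the Tjøstheim/Mokkadem drift criterion, as in \cite{an1996geometrical,min1999probabilistic}, with test function $V(\bm y)=1+\max_i|y_i|$ on the companion $d$-dimensional chain. Irreducibility and aperiodicity follow from the positivity of $\hat p_\epsilon$, and compact sets are small. This yields geometric ergodicity, which is equivalent to $\beta$-mixing at an exponential rate \citep{bradley2005basic}. The rate constants depend on $C_n$ and $\hat p_\epsilon$, hence on $n$; this is why the lemma says ``different coefficients''.

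Starting $\bm U_1^*$ from the data and discarding $m$ burn-in steps makes $\{Y_t^*\}$ approximately stationary, up to a total-variation error that is geometric in $m$.

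\emph{Convergence of stationary laws.} Let $P$ and $\hat P$ be the transition kernels with densities $p_\epsilon(y_1-f_0(\bm y))$ and $\hat p_\epsilon(y_1-\hat f_n(\bm y))$. The key input is
\[
\sup_{\bm y\in[-B_n,B_n]^d}\|P(\bm y,\cdot)-\hat P(\bm y,\cdot)\|_{TV}\le \int|p_\epsilon-\hat p_\epsilon|+L_\epsilon\sup_{\bm y}|\hat f_n(\bm y)-f_0(\bm y)|.
\]
The second term is $o_p(1)$ by \cref{Prop:consistency}, using A4 for the translation continuity of $p_\epsilon$. For the first term, I would show that $\hat p_\epsilon$ is $L_1$-consistent. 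Write $\hat\epsilon_t=\epsilon_t+(f_0-\hat f_n)(\bm Y_{t-1})$; the uniform consistency of $\hat f_n$ reduces this to ordinary kernel density estimation from i.i.d.\ $\epsilon_t$, plus a bias term that is small by \cref{Lemma:uniformconsistencyofCDF} and Scheffé's lemma.

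I would then use a perturbation bound for uniformly ergodic chains, of Mitrophanov type:
\[
\|\pi-\hat\pi\|_{TV}\le C\log(1/\|P-\hat P\|)\,\|P-\hat P\|.
\]
The mass of $P_\pi$ outside $[-B_n,B_n]^d$ vanishes by A3, so the kernel gap on that region is harmless. The conclusion is $P_\pi^*\to P_\pi$ in probability.

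\emph{Main obstacle.} The hard step is making the ergodicity constants uniform enough for the perturbation bound: the drift constant $C_n$ of $\hat P$ grows with $B_n$. My first attempt would be to establish a common minorization on a fixed compact set $K$, namely $\hat P^d(\bm y,\cdot)\ge\delta\nu(\cdot)$ for $\bm y\in K$ with high probability. This should follow from $\hat p_\epsilon\to p_\epsilon$ and $\hat f_n\to f_0$ uniformly on $K$.

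I would combine the minorization with a drift built from E1 for $f_0$ itself. Since $\hat f_n$ is uniformly close to $f_0$ on $[-B_n,B_n]^d$, it inherits $|\hat f_n(\bm y)|\le\lambda\max_i|y_i|+C+o_p(1)$ there. Outside that box, the chain leaves with probability controlled by A3. If this works, the constants become uniform in $n$ on an event of probability tending to one, and the perturbation bound closes the argument.
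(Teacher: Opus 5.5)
Your mixing argument is essentially the paper's. It takes E1 with an $n$-dependent constant from the sup-norm bound of $\mathcal{N}_n$, E2 from the Gaussian-smoothed residual density, and then a drift-criterion ergodicity theorem. For the stationary law you leave the paper's route: the paper verifies the six conditions of Assumption 4 in \cite{franke2002properties} and reuses their decoupling argument. The perturbation step you substitute does not go through as stated. A Mitrophanov-type bound, whichever of $P,\hat P$ you take as reference, needs that reference chain to be uniformly ergodic. It also needs the operator-norm distance $\sup_{\bm y\in\mathbb{R}^d}\|P(\bm y,\cdot)-\hat P(\bm y,\cdot)\|_{TV}$ to be small. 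Neither is available. Under E1, $f_0$ may grow linearly, so $P$ is in general only geometrically ergodic; $\hat P$ is uniformly ergodic for each $n$, but with constants that degrade with $B_n$. Your kernel estimate holds only on $[-B_n,B_n]^d$, and off the box you know nothing beyond $|\hat f_n|\le 2B_n$, so the global distance need not tend to zero. A3 cannot discard that region inside a sup-norm bound. It can only enter through a localized identity such as
\[
\pi-\hat\pi=\bigl(\pi\hat P^{N}-\hat\pi\bigr)+\sum_{k=0}^{N-1}\pi(P-\hat P)\hat P^{k},
\]
with $\pi,\hat\pi$ the invariant laws of $P,\hat P$. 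The second term is at most $N\int\pi(d\bm y)\|P(\bm y,\cdot)-\hat P(\bm y,\cdot)\|_{TV}$, which is $o_p(1)$ for fixed $N$ by A3 and your local bound. The first term, however, requires the bootstrap chain started from $\pi$ to reach equilibrium within $N$ steps uniformly in $n$.

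That uniformity is exactly your ``main obstacle,'' and the proposed fix does not supply it. Inheriting $|\hat f_n(\bm y)|\le\lambda\max_i|y_i|+C+o_p(1)$ from E1 works only on the box. The sentence ``outside that box, the chain leaves with probability controlled by A3'' conflates the data with the bootstrap chain. A3 is a statement about $\{Y_t\}$ and says nothing about where $U^*_t$ goes once it exits the box, where the only drift constant you have is $2B_n$. C1--C3 and \cref{Prop:consistency} constrain $\hat f_n$ only on $[-B_n,B_n]^d$, and the empirical loss never sees the outside. So nothing prevents $\hat f_n$ from being close to $f_0$ on the box but close to $2B_n$ on $\{y_1>B_n+\delta\}$. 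For suitable $\delta$ the bootstrap chain then enters and leaves that region at comparable (tiny) rates, and $P^*_\pi$ can keep non-vanishing mass there.

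You need an extra ingredient, for example an E1-type bound for $\hat f_n$ on all of $\mathbb{R}^d$ with $\lambda,C$ free of $n$, holding with probability tending to one. You can enforce this by evaluating $\hat f_n$ at the coordinatewise projection of $\bm y$ onto $[-B_n,B_n]^d$. Your inherited drift then holds everywhere and, combined with your minorization on a fixed compact set, gives $n$-uniform $V$-geometric ergodicity of $\hat P$; the identity above then closes the argument. The paper's own proof does not settle this point either: it checks condition (1) of \cite{franke2002properties} with $C=2B_n$, which grows with $n$.
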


With \cref{Lemma:propertyofbootstrapseries} and A7, we conclude that the distribution of $a_n A_{n+j}^*$ converges to the same non-trivial limiting distribution as $a_n A_{n+j}$, which further implies the condition D2. The performance of QPI and PPI is verified with simulations and real-data studies in the following sections. 
\FloatBarrier
\section{Simulations}\label{Sec:Simulation}
In this section, we deploy simulation studies to check the performance of QPI and PPI with DNN and standard kernel estimators. 
To get the DNN estimation, we rely on the {\tt torch} package from Python. We apply the hyperparameters with epochs being 100, learning rate being 0.0002; batch size being 10; the burn-in number $m = 100$; $S = 5000$; $B = 1000$; optimizer being {\tt optim.Adam} from {\tt torch} package. The depth and width of DNN estimators are chosen by the theoretical results shown above. To get the local constant kernel estimation, we take {\tt KernelReg} function from {\tt statsmodels} package in Python. For the choice of bandwidth, we take the least-squares cross-validation method.

To check the effects of the curse of dimensionality on both methods, we mimic the ``high-dimension'' situation by setting the sample size to be small and the $d$ is relative large. We consider the following three models:
\begin{itemize}
    \item Model 1: $Y_{t} = \log(Y_{t-1}^2+1) + \epsilon_t$.
    \item Model 2: $Y_{t} = \sin(\sum_{i=1}^{5}Y_{t-i}) + \epsilon_t$.
    \item Model 3: $Y_{t} = \sin(\sum_{i=1}^{10}Y_{t-i}) + \epsilon_t$.
\end{itemize}
where $\{\epsilon_t\}$ are taken as $i.i.d.$ normal. We consider sample sizes $n = 50, 200, 500$. When $n$ is small and $d$ is relatively large, a ``high-dimension'' scenario is created. We evaluate different PIs by the CoVerage Rate (CVR) and Length (LEN) for each step ahead prediction horizon. In addition, we also consider the performance of $L_2$ point prediction by measuring the Mean Squared (MSE). More specifically, we define CVR, LEN and MSE for $j$-step ahead prediction as follows:
$$
\text{CVR}_j = \frac{1}{R}\sum_{i=1}^R\mathbbm{1}(Y_{i,n+j}\in [\hat{L}_{i,j},\hat{U}_{i,j}]);~\text{LEN}_j = \frac{1}{R}\sum_{i=1}^R(\hat{U}_{i,j} - \hat{L}_{i,j});~ \text{MSE}_j = \frac{1}{R}\sum_{i=1}^R(Y_{i,j} - \hat{Y}_{i,j})^2,
$$
where $R$ is the number of replications, which is taken as 1000 in our simulation studies; $Y_{i,n+j}$ is the true $j$-step ahead out-of-sample observations in $i$-th replication; $\hat{L}_{i,j}$ and $\hat{U}_{i,j}$ are lower and upper bounds of different $j$-step ahead prediction intervals in $i$-th replication, respectively; $\hat{Y}_{i,j}$ is the optimal $L_2$ point prediction from different methods. For the simulation, we consider $j = 1,\ldots, 5$. We should mention that our CVR and LEN measures are close to unconditional evaluation of different methods since the dependence on training datasets are canceled out by taking the average on multiple training datasets; see \cite{wang2025model} for discussions on various conditioning. All simulation results presented in \cref{tab:SM1,tab:SM2,tab:SM3}. To make our simulation studies reproducible, we set {\tt np.random.seed(i)} for $i = 1,\ldots, 1000$ to control the randomness to generate the data and {\tt torch.manual\_seed(1)} to control the initialization of the DNN parameters. All simulations are performed using computational services provided by the OSG Consortium \cite{osg07,osg09,osg2,osg1}. To accommodate the running time of the OSG Consortium, we use hyperparameter epochs 50; batch size 50, $S = 5000$ and $B = 500$ for Model 3 data with $n = 500$. 

From \cref{tab:SM1,tab:SM2,tab:SM3}, it is clear that the point prediction from the DNN estimator is more accurate than the variant with the kernel estimator, especially for Models 2 and 3 when the dimension $d$ is relatively large. For the prediction interval, a common phenomenon is that the interval length and coverage rate of PPI tend to be larger than those of the QPI, regardless of whether it is built based on DNN or kernel estimators. In addition, the PPI and QPI based on kernel estimators are struggling to achieve the nominal level, especially for Models 2 and 3 simulation data. For example, only 89.8$\%$ empirical coverage rate can be achieved by PPI with the kernel estimator for 1-step ahead prediction with 500 Model-3 simulation data. On the other hand, PPI with the DNN estimator can achieve 95.5$\%$. Besides, the PPI can render a better coverage rate compared to QPI, especially when the sample size is small. In practice, participants do not know whether the sample size is sufficiently large. Therefore, we suggest the PPI with the DNN estimators as the optimal approach. Also, we should remark that the running time of performing the PPI with the DNN estimator is resolvable since the bootstrap estimation can be done in parallel.    

\begin{table}[htbp]
    \centering
\begin{tabular}{lccccc|cccccc}
\toprule
Step & 1 & 2 & 3 & 4 & 5 & 1 & 2 & 3 & 4 & 5\\
\midrule
 & \multicolumn{5}{c}{DNN} &  \multicolumn{5}{c}{Kernel}\\[3pt]
Model-1; n = 50 & \\[3pt]
CVR-QPI & 0.927 & 0.898 & 0.901 & 0.884 & 0.889 & 0.886 & 0.906 & 0.923 & 0.902 & 0.906 \\
LEN-QPI & 4.105 & 4.352 & 4.353 & 4.355 & 4.354 & 3.533 & 4.385 & 4.552 & 4.607 & 4.635 \\
CVR-PPI & 0.941 & 0.902 & 0.904 & 0.893 & 0.897 & 0.915 & 0.913 & 0.921 & 0.910 & 0.913 \\
LEN-PPI & 4.276 & 4.376 & 4.377 & 4.376 & 4.381 & 4.068 & 4.552 & 4.669 & 4.717 & 4.739 \\
MSE & 1.259 & 1.604 & 1.640 & 1.795 & 1.751 & 1.203 & 1.573 & 1.593 & 1.794 & 1.739 \\[5pt]

Model-1; n = 200 & \\[3pt]
CVR-QPI & 0.935 & 0.943 & 0.955 & 0.934 & 0.945 & 0.931 & 0.945 & 0.952 & 0.936 & 0.944 \\
LEN-QPI & 3.843 & 4.524 & 4.712 & 4.780 & 4.810 & 3.817 & 4.536 & 4.714 & 4.773 & 4.798 \\
CVR-PPI & 0.932 & 0.944 & 0.955 & 0.931 & 0.942 & 0.940 & 0.945 & 0.959 & 0.932 & 0.936 \\
LEN-PPI & 3.875 & 4.542 & 4.740 & 4.802 & 4.825 & 3.955 & 4.577 & 4.746 & 4.793 & 4.815 \\
MSE & 1.066 & 1.339 & 1.421 & 1.674 & 1.677 & 1.101 & 1.343 & 1.420 & 1.667 & 1.677 \\[5pt]

Model-1; n = 500 & \\[3pt]
CVR-QPI & 0.950 & 0.940 & 0.944 & 0.941 & 0.954 & 0.956 & 0.945 & 0.941 & 0.937 & 0.953 \\
LEN-QPI & 3.875 & 4.594 & 4.776 & 4.840 & 4.864 & 3.870 & 4.585 & 4.767 & 4.828 & 4.847 \\
CVR-PPI & 0.953 & 0.938 & 0.946 & 0.940 & 0.952 & 0.955 & 0.943 & 0.941 & 0.935 & 0.952 \\
LEN-PPI & 3.882 & 4.603 & 4.786 & 4.841 & 4.866 & 3.935 & 4.600 & 4.768 & 4.825 & 4.842 \\
MSE & 0.942 & 1.474 & 1.545 & 1.555 & 1.556 & 0.964 & 1.460 & 1.522 & 1.535 & 1.536 \\
\bottomrule
\end{tabular}
    \caption{Simulation results with Model-1 data.}
    \label{tab:SM1}
\end{table}

\begin{table}[htbp]
    \centering
\begin{tabular}{lccccc|cccccc}
\toprule
Step & 1 & 2 & 3 & 4 & 5 & 1 & 2 & 3 & 4 & 5\\
\midrule
 & \multicolumn{5}{c}{DNN} &  \multicolumn{5}{c}{Kernel}\\[3pt]
Model-2; n = 50 & \\[3pt]
CVR-QPI & 0.907 & 0.920 & 0.938 & 0.929 & 0.931 & 0.546 & 0.595 & 0.681 & 0.724 & 0.754 \\
LEN-QPI & 4.417 & 4.489 & 4.602 & 4.680 & 4.707 & 2.453 & 2.841 & 3.167 & 3.431 & 3.635 \\
CVR-PPI & 0.920 & 0.922 & 0.943 & 0.931 & 0.938 & 0.693 & 0.712 & 0.750 & 0.766 & 0.802 \\
LEN-PPI & 4.634 & 4.655 & 4.703 & 4.764 & 4.800 & 4.176 & 4.202 & 4.243 & 4.345 & 4.358 \\
MSE & 1.639 & 1.615 & 1.498 & 1.657 & 1.542 & 2.352 & 2.180 & 1.985 & 2.133 & 1.888 \\[5pt]

Model-2; n = 200 & \\[3pt]
CVR-QPI & 0.877 & 0.894 & 0.922 & 0.911 & 0.914 & 0.755 & 0.835 & 0.877 & 0.908 & 0.928 \\
LEN-QPI & 4.051 & 4.185 & 4.292 & 4.335 & 4.397 & 3.457 & 3.942 & 4.269 & 4.468 & 4.638 \\
CVR-PPI & 0.895 & 0.912 & 0.931 & 0.914 & 0.916 & 0.826 & 0.861 & 0.898 & 0.907 & 0.928 \\
LEN-PPI & 4.359 & 4.405 & 4.436 & 4.439 & 4.451 & 4.705 & 4.746 & 4.767 & 4.775 & 4.811 \\
MSE & 1.709 & 1.576 & 1.636 & 1.619 & 1.519 & 2.002 & 1.777 & 1.821 & 1.722 & 1.553 \\[5pt]

Model-2; n = 500 & \\[3pt]
CVR-QPI & 0.880 & 0.879 & 0.899 & 0.916 & 0.931 & 0.801 & 0.873 & 0.913 & 0.927 & 0.936 \\
LEN-QPI & 3.747 & 4.067 & 4.244 & 4.354 & 4.431 & 3.391 & 4.032 & 4.440 & 4.617 & 4.747 \\
CVR-PPI & 0.910 & 0.906 & 0.911 & 0.922 & 0.931 & 0.844 & 0.891 & 0.912 & 0.926 & 0.937 \\
LEN-PPI & 4.176 & 4.375 & 4.451 & 4.493 & 4.520 & 4.639 & 4.766 & 4.857 & 4.835 & 4.862 \\
MSE & 1.494 & 1.629 & 1.600 & 1.564 & 1.503 & 1.732 & 1.717 & 1.680 & 1.598 & 1.520 \\
\bottomrule
\end{tabular}
    \caption{Simulation results with Model-2 data.}
    \label{tab:SM2}
\end{table}

\begin{table}[htbp]
    \centering
\begin{tabular}{lccccc|cccccc}
\toprule
Step & 1 & 2 & 3 & 4 & 5 & 1 & 2 & 3 & 4 & 5\\
\midrule
 & \multicolumn{5}{c}{DNN} &  \multicolumn{5}{c}{Kernel}\\[3pt]
Model-3; n = 50 & \\[3pt]
CVR-QPI & 0.897 & 0.912 & 0.918 & 0.894 & 0.933 & 0.021 & 0.018 & 0.022 & 0.035 & 0.034 \\
LEN-QPI & 4.343 & 4.407 & 4.441 & 4.483 & 4.534 & 0.078 & 0.104 & 0.116 & 0.146 & 0.169 \\
CVR-PPI & 0.918 & 0.920 & 0.934 & 0.899 & 0.936 & 0.150 & 0.143 & 0.157 & 0.183 & 0.163 \\
LEN-PPI & 4.600 & 4.601 & 4.589 & 4.603 & 4.643 & 1.160 & 1.211 & 1.208 & 1.257 & 1.330 \\
MSE & 1.641 & 1.655 & 1.479 & 1.706 & 1.494 & 3.004 & 2.919 & 2.896 & 2.957 & 2.839 \\[5pt]

Model-3; n = 200 & \\[3pt]
CVR-QPI & 0.848 & 0.859 & 0.879 & 0.869 & 0.879 & 0.521 & 0.582 & 0.628 & 0.630 & 0.651 \\
LEN-QPI & 3.950 & 4.008 & 4.077 & 4.118 & 4.162 & 2.392 & 2.599 & 2.798 & 2.960 & 3.129 \\
CVR-PPI & 0.866 & 0.896 & 0.913 & 0.892 & 0.896 & 0.639 & 0.665 & 0.731 & 0.709 & 0.729 \\
LEN-PPI & 4.348 & 4.365 & 4.396 & 4.383 & 4.393 & 3.981 & 3.983 & 4.008 & 4.041 & 4.085 \\
MSE & 1.813 & 1.721 & 1.677 & 1.804 & 1.715 & 2.315 & 2.308 & 2.040 & 2.307 & 2.075 \\[5pt]

Model-3; n = 500 & \\[3pt]
CVR-QPI & 0.961 & 0.933 & 0.934 & 0.943 & 0.954 & 0.869 & 0.880 & 0.896 & 0.902 & 0.942 \\
LEN-QPI & 4.633 & 4.641 & 4.650 & 4.656 & 4.667 & 4.054 & 4.277 & 4.407 & 4.482 & 4.579 \\
CVR-PPI & 0.955 & 0.931 & 0.933 & 0.937 & 0.945 & 0.898 & 0.881 & 0.889 & 0.902 & 0.933 \\
LEN-PPI & 4.645 & 4.651 & 4.652 & 4.660 & 4.649 & 5.020 & 4.972 & 4.894 & 4.851 & 4.835 \\
MSE & 1.359 & 1.533 & 1.641 & 1.472 & 1.438 & 1.686 & 1.752 & 1.819 & 1.683 & 1.545 \\
\bottomrule
\end{tabular}
    \caption{Simulation results with Model-3 data.}
    \label{tab:SM3}
\end{table}

\FloatBarrier

\section{Real Data Analysis}\label{Sec:realdata}
In practice, we are unable to replicate the simulation multiple times with the true data-generating model. To evaluate the performance of QPI and PPI with DNN and standard kernel estimators, we consider the moving window prediction procedure. In particular, if we have time series data $y_1,\ldots,y_n$, we will take $y_{1}, ...y_w$ to be the training window, where $w$ is a constant chosen by users. 
Then, we can construct the prediction intervals for the true observed $y_{w+1}, ...y_{w+J}$. Subsequently, we shift the window to $y_{2}, ...y_{w+1}$ and train different models and then make $j$-step ahead prediction intervals. Here, we consider $j = 1,\ldots,J$ and repeat the prediction procedure until we go through the last $J$-step ahead prediction. Lastly, the average coverage rate for each $j$-step ahead prediction over the $n-J-w+1$ number of moving window predictions can be used to measure the coverage performance of different intervals. In short, we compute the following two metrics for various prediction intervals:
\begin{equation*}
    \text{CVR}_j = \frac{1}{n-J-w+1}\sum_{i=w}^{n-J}\mathbbm{1}(y_{i+j}\in [\hat{L}_{i+j},\hat{U}_{i+j}]);~\text{LEN}_j = \frac{1}{n-J-w+1}\sum_{i=w}^{n-J}(\hat{U}_{i+j} - \hat{L}_{i+j}),
\end{equation*}
where $y_{i+j}$ is the ($i+j$)-th observed data; $\hat{L}_{i+j}$ and $\hat{U}_{i+j}$ are the corresponding lower and upper bounds of various prediction intervals. More specifically, $\text{CVR}_j$ and $\text{LEN}_j$ depict the average coverage rate and the average interval length of different methods on $j$-step ahead moving window predictions.

We consider the M4 competition dataset \citep{makridakis2020m4}. To make sure we have a good evaluation of the performance of different intervals, the value of $n-J-w+1$ needs to be sufficiently large. Thus, we consider yearly data series which have a length between 300 and 350, so $n-J-w+1$ is at least 250 if $w = 50$, resulting in 7 different series in total. We should mention that the preprocessing of each time series can be very subtle; see \cite{smyl2020hybrid} for some practical suggestions. In this work, we do not focus on the strategy of preprocessing time series, so we take simple first-order differencing on all series.

To compare the performance of PI based on the DNN and the kernel estimators, we consider $w = 50$ and $w = 100$. Meanwhile, we consider the input dimension $d = 1,5,10$. In practice, the order selection is a crucial but non-trivial problem in the modeling stage of time series prediction. In this paper, our focus is on the pertinent prediction with DNN estimators and its comparison with the predictions relying on the classical kernel estimators. Thus, we skip the order selection step and consider multiple order values to evaluate various prediction methods. We take $J = 5$ and we first compute the overall average coverage rate (OCVR) and the overall average prediction interval length (OLEN) over all 7 series and $J$ step-ahead predictions. More specifically, we compute the average coverage rate and interval length over each prediction horizon for each time series, i.e., $\frac{1}{J}\sum_{j=1}^J\text{CVR}_j$ and $\frac{1}{J}\sum_{j=1}^J\text{LEN}_j$. We further consider the average value of $\frac{1}{J}\sum_{j=1}^J\text{CVR}_j$ and $\frac{1}{J}\sum_{j=1}^J\text{LEN}_j$ on each time series and call the final two metrics the OCVR and OLEN. This result is presented in \cref{tab:realOverall}. Beyond the overall average results, we also report the overall sample standard deviation of the coverage rate and the interval length, namely StdCVR and StdLEN, in \cref{tab:realOverallstd}. From these two tables, it is clear that the performance of the PI based on the kernel estimator spoils when the ratio $w/d$ is small. On the other hand, the performance of the PI based on the DNN estimator, especially the PPI with the DNN estimator, is more stable. Although the overall coverage rate of PI with the kernel estimator is slightly larger than that of the variant with the DNN estimator when the ratio $w/d$ is large (e.g., $d = 1$), the overall average interval length and its corresponding sample standard deviation are larger when the kernel estimator is considered, which stands for a drawback in practice. More importantly, people do not know which order $d$ should be used with real data. Since the PI with the DNN estimator is more stable with varying window size and model order, it shall be a superior option for real studies compared to the PI with the kernel estimator. In particular, we will suggest the PPI with DNN estimator, which is also the conclusion made based on the simulation results.

\begin{table}[]
    \centering
    \begin{tabular}{cccc|ccc}
\toprule
   & \multicolumn{3}{c}{DNN} &  \multicolumn{3}{c}{Kernel}\\[3pt]
\midrule
& \multicolumn{6}{c}{$w = 50$}\\[3pt]
d: & 1 & 5 & 10 &  1 & 5 & 10 \\
     OCVR-QPI  & 0.904 & 0.839 & 0.765  & 0.901 & 0.667 & 0.140\\
     OLEN-QPI  & 249.721 & 177.748 & 253.878 & 203.281 & 167.785 & 174.764 \\
     OCVR-PPI  & 0.911 & 0.862 & 0.803 & 0.917 & 0.748 & 0.182 \\
     OLEN-PPI  & 302.778 & 269.948 & 265.031 & 320.283 & 296.002 & 248.926\\[5pt]
& \multicolumn{6}{c}{$w = 100$}\\[3pt]
    OCVR-QPI    & 0.889 & 0.776 & 0.510 & 0.895 & 0.795 & 0.290 \\
     OLEN-QPI    & 210.767 & 207.736 & 183.530 & 247.991 & 156.444 & 115.583  \\
     OCVR-PPI    & 0.891 & 0.823 & 0.624  & 0.903 & 0.847 & 0.389\\
     OLEN-PPI & 245.699 & 259.832 & 239.462 & 274.770 & 215.613 & 199.710\\
\bottomrule
    \end{tabular}
    \caption{The average coverage rate and interval length of different methods over each prediction horizon $j$ and 7 selected time series, i.e., OCVR and OLEN defined in the main text.}
    \label{tab:realOverall}
\end{table}

\begin{table}[]
    \centering
    \begin{tabular}{cccc|ccc}
\toprule
   & \multicolumn{3}{c}{DNN} &  \multicolumn{3}{c}{Kernel}\\[3pt]
\midrule
& \multicolumn{6}{c}{$w = 50$}\\[3pt]
d: & 1 & 5 & 10 &  1 & 5 & 10 \\
     StdCVR-QPI  & 0.014 & 0.021 & 0.069  & 0.013 & 0.075 & 0.133\\
     StdLEN-QPI  & 309.195 & 245.363 & 444.964 & 285.897 & 275.712 & 385.182\\
     StdCVR-PPI  & 0.014 & 0.019 & 0.059  & 0.017 & 0.061 & 0.159 \\
     StdLENPPI  & 402.349 & 406.473 & 436.904 & 448.045 & 467.784 & 521.170\\[5pt]
& \multicolumn{6}{c}{$w = 100$}\\[3pt]
     StdCVR-QPI    & 0.021 & 0.056 & 0.191 & 0.016 & 0.070 & 0.160 \\
     StdLEN-QPI    & 281.192 & 339.462 & 358.694 & 387.294 & 242.921 & 259.348 \\
     StdCVR-PPI    & 0.019 & 0.031 & 0.151 & 0.017 & 0.035 & 0.186 \\
     StdLEN-PPI & 286.280 & 391.971 & 435.777 & 435.848 & 310.575 & 404.207\\
\bottomrule
    \end{tabular}
    \caption{The sample standard deviation of the coverage rate and interval length from different methods over each prediction horizon $j$ and 7 selected time series, i.e., the spread of the overall average coverage rate and the interval length.}
    \label{tab:realOverallstd}
\end{table}

\FloatBarrier
\section{Discussion}\label{Sec:conclusion}
In this work, we consider the prediction inference of one type of time series data generated by an additive model with the DNN estimator. We show that the point predictions are consistent and the corresponding PIs are asymptotically valid under standard assumptions. To improve the coverage rate of the PI in the finite sample cases, we further give the algorithm to build the so-called PPI. Under the minimal assumptions, we verify that the PPI is still attainable. Moreover, as revealed by the simulation and real data studies, the PI with a DNN estimator is more robust against the curse of dimensionality, compared to the variant with the kernel estimator. Also, the PPI is shown to possess a better coverage rate than the naive QPI. 

There are several possible directions in which to extend the current work. First, the order selection for prediction with kernel and DNN estimators is not trivial. A practical, useful order selection criterion is important for participants. Secondly, the prediction inference of the general model, i.e., without the additive model structure constraint, has not been explored so far. Thirdly, although the training of DNN on pseudo data can be parallelized, the computation is still heavy in terms of memory. A more computationally efficient way to build PPI is desired in future work. 
 
\section*{Acknowledgement}
This research was done using computational services provided by the OSG Consortium \cite{osg07,osg09,osg2,osg1}, which is supported by the National Science Foundation awards 2030508 and 1836650. 

\section*{Declaration of Interest Statement}
The author declares no conflict of interest.

\section*{Declaration of Generative AI Use}
The author reports generative AI was not used in their research or preparation of this manuscript.

\clearpage
\bibliographystyle{apalike}
\bibliography{refs}

\appendix

\section{The Illustration of DNN Structure}\label{Appendxi:IllustrationStructure}
We present the structure of a simple fully connected DNN in \cref{FigDNNstructureex} for reference.
\begin{figure}[htbp]
 \centering
 \caption{The illustration of a fully connected DNN with $L = 2$, $H = 4$ and $W = 37$, and input dimension $d = 2$ and output dimension 1.}
 \includegraphics[scale = 0.5]{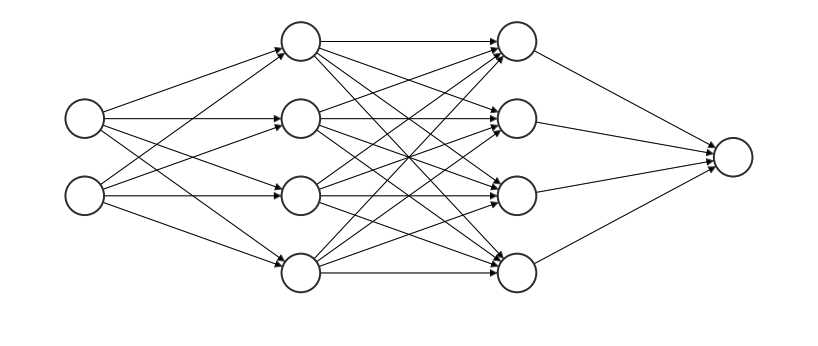}
 \label{FigDNNstructureex}
\end{figure}

\section{Proof}\label{Appendxi:proof}
\subsection{Proof of \cref{Lemma:theEstimationability}}

\begin{proof}
We first consider the function class $\mathcal{F}^{s}_{r,d, [0,1]^d}$ in a more constrained domain, i.e. for $\forall~g\in \mathcal{F}^{s}_{r,d, [0,1]^d}$, s.t.,
$$
\|g\|_{C^{k, \alpha}\left([0,1]^d\right)}=\max \left\{\max _{\bm{k}:|\bm{k}| \leq k} \max _{\bm{u} \in[0,1]^d}\left|D^{\bm{k}} g(\bm{u})\right|, \max _{\bm{k}:|\bm{k}|=k} \sup _{\substack{\bm{u}, \bm{v} \in[0,1]^d \\ \bm{u} \neq \bm{v}}} \frac{\left|D^{\bm{k}} g(\bm{u})-D^{\bm{k}} g(\bm{v})\right|}{\|\bm{u}-\bm{v}\|^\alpha}\right\}~;~\|g\|_\infty\leq B_n.
$$
Then, we will extend the domain $[0,1]^d$ to $[-B_n, B_n]^d$. 

From Theorem 3.1 of \cite{yarotsky2020phase}, there exists a general DNN $\tilde{f}$ with depth $\tilde{L}_n\leq C_1 \log \tilde{W}_n$, in which the neurons may not be fully connected, s.t., $\|\tilde{f} - g_0 \|_{\infty}\leq C_2\tilde{W}_n^{-r/d}$; where $g_0 \in \mathcal{F}^{s}_{r,d, [0,1]^d}$; $C_1$ and $C_2$ are two constants independent with $\tilde{W}_n$ which is the number of total parameters in $\tilde{f}$. Then, by Lemma 1 of \cite{farrell2021deep}, this general DNN $\tilde{f}$ can be embedded into a fully connected equal width DNN with width $H_n\leq C_3 \tilde{W}_n \tilde{L}_n$ and the depth $L_n = \tilde{L}_n$; where $C_3$ is a constant independent with $\tilde{W}_n$ and $\tilde{L}_n$. Thus, $\exists f_{DNN} \in \mathcal{N}(L_n,H_n,2B_n)$, s.t., $f_{DNN} = \tilde{f}$. 

We should note that $\tilde{f}\in\mathcal{N}(L_n,H_n,2B_n)$ is feasible if $W_n$ is large s.t., $C_2\tilde{W}_n^{-r/d}\leq B_n$. Then, we have $\|\tilde{f}\|_{\infty} \leq \|g_0\|_{\infty} +  C_2\tilde{W}_n^{-r/d} \leq 2B_n$. Then, we can take $\tilde{W}_n^{-r/d} = n^{-\eta\frac{r}{d}}$; where $\eta$ is an appropriate constant belongs $(0,1)$. This implies that $n = \tilde{W}_n^{\frac{1}{\eta}}$ which means the sample size is larger that the number of total parameters. 

    Summarizing the above discussions, when $C_2 n^{-\eta\frac{r}{d}} \leq B_n$ for sufficiently large $n$, we have an $f_{DNN} \in \mathcal{N}(L_n,H_n,2B_n)$, where
    $$
    L_n = O(\log n); H_n = O(n^{\eta}\log n)
    $$
    s.t.,
    $$
    \|f_{DNN} - f_0 \|_{\infty} \leq n^{-\eta\frac{r}{d}}.
    $$

    Then, we attempt to extend the results to $f_0$ which belongs to a larger function class $\mathcal{F}_{r,[-B_n,B_n]^d} := \{f:  \|f\|_{C^{k, \alpha}\left([-B_n,B_n]^d\right)}< \infty~\text{and}~\|f\|_\infty\leq B_n\}$. We define $f(\bm{x}) = g(T^{-1}(\bm{x}))$; where $\bm{x} \in [-B_n,B_n]^d$ and $T(\bm{u}) = 2B_n\bm{u} - B_n$. Thus, $T(\bm{u}): [0,1]^d \to [-B_n,B_n]^d$. Let $T^{-1}(\bm{x}) = \bm{u}$. $g(T^{-1}(\bm{x})) = g(\bm{u})$ is a function on domain $[0,1]^d$. Moreover, we have $g(\bm{u}) = f(T(\bm{u}))$. By the chain rule and $T^{-1}(\bm{x}) = \frac{\bm{x}+B_n}{2B_n}$, we have 
    $$
    \left|D^{\bm{k}} g(\bm{u})\right| = (2B_n)^{\bm{k}} \left|D^{\bm{k}} f(\bm{x})\right|; \frac{\left|D^{\bm{k}} g(\bm{u})-D^{\bm{k}} g(\bm{v})\right|}{\|\bm{u}-\bm{v}\|^\alpha} = \frac{(2B_n)^r\left|D^{\bm{k}} f(\bm{x})-D^{\bm{k}} f(\bm{y})\right|}{\|\bm{x}-\bm{y}\|^\alpha}.
    $$

Then, consider the function $\frac{g(\bm{u})}{(2B_n)^r}$, it is clear that $\|\frac{g(\bm{u})}{(2B_n)^r}\|_{C^{k, \alpha}\left([0,1]^d\right)}< \infty$ by assumption A1 on the finiteness of function class of $f$. Therefore, $\exists g^*_{DNN}\in \mathcal{N}(L_n,H_n,2B_n)$ s.t., $\|g^*_{DNN} - \frac{g(\bm{u})}{(2B_n)^r} \|_{\infty} \leq n^{-\eta\frac{r}{d}}$. Therefore, $(2B_n)^r\|g^*_{DNN} - \frac{g(\bm{u})}{(2B_n)^r} \|_{\infty} \leq (2B_n)^rn^{-\eta\frac{r}{d}}$. Denote $f^*_{DNN} =(2B_n)^r g^*_{DNN}$, we have $\|f^*_{DNN}(\bm{u}) - g(\bm{u}) \|_{\infty} \leq (2B_n)^rn^{-\eta\frac{r}{d}}$; where $f^*_{DNN}$ and $g^*_{DNN}$ have the same structure. Further notice that 
$$ \|f_1-f_2\|_{\infty} = \sup _{\bm{x} \in\left[-B_n, B_n\right]^d}|f_1(\bm{x})-f_2(\bm{x})|  =\sup _{\bm{x}}\left|g_1\left(T^{-1}(\bm{x})\right)-g_2\left(T^{-1}(\bm{x})\right)\right| = \sup _{\bm{u}}\left|g_1(\bm{u})-g_2(\bm{u})\right|;$$
for any two pairs of functions $f_1$ and $f_2$, $g_1$ and $g_2$. We can conclude that $\forall f_0 \in \mathcal{F}_{r,[-B_n,B_n]^d}$, $\exists f^*_{DNN} \in \mathcal{N}(L_n,H_n,(2B_n)^{r+1})$, s.t., $\|f^*_{DNN}(\bm{x}) - f_0(\bm{x}) \|_{\infty} \leq (2B_n)^rn^{-\eta\frac{r}{d}}$, where $T^{-1}(\bm u)$ is absorbed into the first layer of the DNN estimator $f^*_{DNN}$.
    \end{proof}

\subsection{Proof of \cref{Lemma:variantofTheorem2}}
\begin{proof}
The proof will be almost the same as the proof of Theorem 2 in \cite{brown2024statistical}. Compared to their results, the difference is that $\|f_0\|_{\infty}$ in our paper is assumed to be bounded by some nonecreasing sequence $\left\{D_n\right\}_{n \in \mathbb{N}}$ instead of being bounded by 1 in their assumptions. The proof is still relied on the localization analysis initially utilized by \cite{farrell2021deep} and the independent blocks applied by \cite{chen1998sieve}. In particular, the new assumption $\|f_0\|_{\infty}\leq D_n$ does not influence the proof details in Appendix D.2, D.5.2, D.5.3, D.5.4 of \cite{brown2024statistical} when their assumption $\|f_0\|_{\infty}\leq 1$ is applied. Therefore, we can have the same error bound for any sieve estimator and the corresponding function class satisfy assumptions B1 - B4. 
\end{proof}

\subsection{Proof of \cref{Theorem:esterrorDNN}}

\begin{proof}
    To prove this theorem, we can rely on \cref{Lemma:variantofTheorem2} and check if B1 to B4 therein are satisfied by assuming A1 to A3 with DNN estimators from $\mathcal{N}_n$.

    For assumption B1, by \cref{Lemma:theEstimationability} and the setting $L_n \asymp \log(n)$ and $H_n \asymp n^{\left(\frac{d}{r+d}\right)\left(1 / 2-K_B\right)} \log (n)$ in \cref{Theorem:esterrorDNN}, we have 
    $$
    \left\|f_{D N N}^*-f_0\right\|_{\infty} \leq\left(2 B_n\right)^r n^{-\left(\frac{d}{r+d}\right)\left(1 / 2-K_B\right) \frac{r}{d}} = \left(2 B_n\right)^r n^{-\left(\frac{r}{r+d}\right)\left(1 / 2-K_B\right)}.
    $$
    In particular, $\left(2 B_n\right)^r n^{-\left(\frac{r}{r+d}\right)\left(1 / 2-K_B\right)} = O(n^{rK_B-\left(\frac{r}{r+d}\right)\left(1 / 2-K_B\right)})$. By the range of $K_B$, $K_B < \frac{1}{2(r+d)+2}$, we have $rK_B-\left(\frac{r}{r+d}\right)\left(1 / 2-K_B\right)<0$. Thus, $B2$ is satisfied as $n\to\infty$.

    For B2, since $\sup_{f\in\mathcal{N}_n}\|f\|_{\infty}\leq 2B_n <\infty$, it is a routine check, which has been done in \cite{farrell2021deep,brown2024statistical}, etc. 

    For B3, in a similar proof of Lemma 14 in \cite{brown2024statistical}, we have $\operatorname{Pdim}\left(\mathcal{F}_n\right) \asymp n^{2\eta}\log^5(n)$ for $L_n = O(\log n); H_n = O(n^{\eta}\log n)$. For the specific setting in \cref{Theorem:esterrorDNN}, we have $\operatorname{Pdim}\left(\mathcal{F}_n\right) \asymp n^{2\left(\frac{d}{r+d}\right)\left(1 / 2-K_B\right)}\log^5(n)$. Thus,
    $$
    \frac{B_n}{\sqrt{n}}\left[\sqrt{Pdim\left(\mathcal{F}_n\right) \log (n)}+\sqrt{\log \log (n)}\right] = O\left( n^{\left(\frac{d}{r+d}\right)\left(1 / 2-K_B\right) - 1/2 + K_B}\log^3(n) \right) \to 0,
    $$
    since $d/(d+p) < 1$.

    For B4, taking $m_n(\bm{Z}_t):=2\sup_{f\in\mathcal{N}_n}|Y_t  - f(\bm{Y}_{t-1})|$, the check of the first part $|q(\boldsymbol{z}, f(\boldsymbol{z}))$ $-q\left(\boldsymbol{z}, f^{\prime}(\boldsymbol{z})\right)| \leq m_n(\boldsymbol{z})\left|f(\boldsymbol{z})-f^{\prime}(\boldsymbol{z})\right|$ is again a routine practice with $q(\boldsymbol{z}, f(\boldsymbol{z}))$ being least square loss. 
    
    For the second part, we can take $\mu_n:=\max \left\{6 \mathbb{E}\left[Y_t^2 \mathbbm{1}_{\left\{\left|Y_t\right| \geq B_n\right\}}\right], n^{-1}\right\}$. For $\forall~f\in \left\{\mathcal{F}_n \cup\left\{f_0\right\}\right\}$ and taking $C_5 = 8$, we first consider
    \begin{equation}\label{Eq:mainB4}
        \begin{split}
        &\mathbb{E}\left[\left|q\left(\boldsymbol{Z}_t, f\left(\boldsymbol{Z}_t\right)\right)\right| \mathbbm{1}_{\left\{m_n\left(\boldsymbol{Z}_t\right) \geq 8 B_n\right\}}\right] \\
        &\leq \mathbb{E}\left[\left|Y_t^2+f\left(\bm{Y}_{t-1}\right)^2-2 f\left(\bm{Y}_{t-1}\right) Y_t\right| \mathbbm{1}_{\left\{\left|Y_t\right| \geq 2B_n\right\}}\right] \\ & \leq \mathbb{E}\left[Y_t^2 \mathbbm{1}_{\left\{\left|Y_t\right| \geq 2B_n\right\}}\right]+\mathbb{E}\left[f\left(\bm{Y}_{t-1}\right)^2 \mathbbm{1}_{\left\{\left|Y_t\right| \geq 2B_n\right\}}\right]+2 \mathbb{E}\left[\left|f\left(\bm{Y}_{t-1}\right) Y_t\right| \mathbbm{1}_{\left\{\left|Y_t\right| \geq 2B_n\right\}}\right] \\ & \leq \mathbb{E}\left[Y_t^2 \mathbbm{1}_{\left\{\left|Y_t\right| \geq 2B_n\right\}}\right]+4B_n^2 P\left(\left|Y_t\right| \geq 2B_n\right)+4 B_n \mathbb{E}\left[\left|Y_t\right| \mathbbm{1}_{\left\{\left|Y_t\right| \geq 2B_n\right\}}\right],
        \end{split}
    \end{equation}
    where the first inequality comes from the expansion of $q\left(\boldsymbol{Z}_t, f\left(\boldsymbol{Z}_t\right)\right)$ and the fact $\mathbbm{1}_{\left\{m_n\left(\boldsymbol{Z}_t\right) \geq 8B_n\right\}} \leq \mathbbm{1}_{\{Y_t\geq B_n\}}$ for all $n$ and $t$. To see this, we observe that $m_n(\bm{Z}_t) \leq 2(|Y_t| + 2B_n)$ for all $n$ and $t$ by the definition of $\mathcal{N}_n$. Then, 
    \begin{equation}\label{Eq:maxMY}
        \begin{split}
            \left\{\omega: \max _{t \in\{1, \ldots, n\}} m_n\left(\boldsymbol{Z}_t(\omega)\right) \geq 8B_n\right\} & \subseteq\left\{\omega: \max _{t \in\{1, \ldots, n\}} 2\left(\left|Y_t(\omega)\right|+2B_n\right) \geq 8B_n\right\}  \\ & =\left\{\omega: \max _{t \in\{1, \ldots, n\}}\left|Y_t(\omega)\right| \geq 2B_n\right\};
        \end{split}
    \end{equation}
    the second inequality comes from the triangle inequality; the last inequality comes from the fact $\sup_{f\in\mathcal{N}_n}\|f\|_{\infty}\leq 2B_n$. Moreover, by Markov inequality, we have 
    \begin{equation*}
        \begin{split}
      &4B_n^2 P\left(\left|Y_t\right| \geq 2B_n\right)\\
      &=4B_n^2 P\left(\left|Y_t\right| \mathbbm{1}_{\left\{\left|Y_t\right| \geq 2B_n\right\}} \geq 2B_n\right)\\
      &=4B_n^2 P\left(Y_t^2 \mathbbm{1}_{\left\{\left|Y_t\right| \geq 2B_n\right\}} \geq4 B_n^2\right) \\
      & \leq \mathbb{E}\left[Y_t^2 \mathbbm{1}_{\left\{\left|Y_t\right| \geq 2B_n\right\}}\right],      
        \end{split}
    \end{equation*}

    and we have 
    $$
    B_n \mathbb{E}\left[\left|Y_t\right| \mathbbm{1}_{\left\{\left|Y_t\right| \geq 2B_n\right\}}\right] \leq \mathbb{E}\left[Y_t^2 \mathbbm{1}_{\left\{\left|Y_t\right| \geq 2B_n\right\}}\right].
    $$
    Thus, by stationarity, \cref{Eq:mainB4} implies that 
    $$
    \sup _{f \in\left\{\mathcal{F}_n \cup\left\{f_0\right\}\right\}}\mathbb{E}\left[\left|q\left(\boldsymbol{Z}_t, f\left(\boldsymbol{Z}_t\right)\right)\right| \mathbbm{1}_{\left\{m_n\left(\boldsymbol{Z}_t\right) \geq 8 B_n\right\}}\right] \leq 6 \mathbb{E}\left[Y_t^2 \mathbbm{1}_{\left\{\left|Y_t\right| \geq 2B_n\right\}}\right].
    $$
    With assumption A3, by Lemma 1 of \cite{brown2024statistical}, we have $\mathbb{E}\left[Y_t^2 \mathbbm{1}_{\left\{\left|Y_t\right| \geq 2B_n\right\}}\right]\to 0 $ as $n\to\infty$. The four assumptions in \cref{Lemma:variantofTheorem2} are satisfied in the context of \cref{Theorem:esterrorDNN}.

    To finish the proof, we need to check remaining conditions in \cref{Lemma:variantofTheorem2}, i.e., we need to verify the existence of $a$ and $\delta$. We take 
    $$
    \delta = C_\delta n^{2rK_B+2\left(\frac{d}{r+d}\right)\left(1 / 2-K_B\right)}\log^6(n)~; a=\left\lceil \log ^2(n)\right\rceil.
    $$
    Then, we need to show $\sqrt{\delta } \geq \frac{\tilde{\epsilon}_n \sqrt{n}}{B_n a-\tilde{\epsilon}_n(\log n)(\log \log n)}$. We have, 
    \begin{equation*}
        \begin{split}
            \frac{\tilde{\epsilon}_n \sqrt{n}}{B_n a-\tilde{\epsilon}_n(\log n)(\log \log n)} &= O\left( \frac{\left(B_n\right)^r n^{1/2-\left(\frac{r}{r+d}\right)\left(1 / 2-K_B\right)}}{B_na}  \right) \leq O\left( \frac{\left(B_n\right)^r n^{1/2-\left(\frac{r}{r+d}\right)\left(1 / 2-K_B\right)}}{B_n}  \right) \\
            & = O\left( n^{rK_B + \frac{d}{r+d}(1/2 - K_B)} \right).
        \end{split}
    \end{equation*}
    Obviously, $\sqrt{\delta} \geq \frac{\tilde{\epsilon}_n \sqrt{n}}{B_n a-\tilde{\epsilon}_n(\log n)(\log \log n)}$ when $C_\delta$ is sufficiently large. Thus, we can apply \cref{Lemma:variantofTheorem2}. To get the final rate in \cref{Theorem:esterrorDNN}, notice that 
    \begin{equation*}
        \begin{split}
            \epsilon_n(\delta, a) &:= B_n \sqrt{\frac{a}{n}}\left[\sqrt{Pdim\left(\mathcal{F}_n\right) \log (n)}+\sqrt{\log \log (n)+\delta}\right]+\sqrt{\tilde{\epsilon}_n^2+\mu_n+\theta_n}\\
            & = B_n \sqrt{\frac{a}{n}}\left[\sqrt{n^{2\left(\frac{d}{r+d}\right)\left(1 / 2-K_B\right)}\log^6(n)}+\sqrt{\log \log (n)+C_\delta n^{2rK_B+2\left(\frac{d}{r+d}\right)\left(1 / 2-K_B\right)}\log^6(n)}\right]\\
            & +\sqrt{\tilde{\epsilon}_n^2
             +\mu_n+\theta_n}\\
            & \leq B_n \sqrt{\frac{a}{n}} O( n^{rK_B+\left(\frac{d}{r+d}\right)\left(1 / 2-K_B\right)}\log^3(n)) + \sqrt{\tilde{\epsilon}_n^2+\mu_n+\theta_n} \\
            & = O(n^{rK_B - \frac{r}{r+d}(1/2 - K_B)}\log^4(n)) +\sqrt{\tilde{\epsilon}_n^2+\mu_n+\theta_n} \\
            & \leq O(n^{rK_B - \frac{r}{r+d}(1/2 - K_B)}\log^4(n)) + \sqrt{\mu_n+\theta_n},
        \end{split}
    \end{equation*}
    where the first inequality is trivial and the second inequality is due to $\sqrt{A^2 +B} \leq A + \sqrt{B}$ for $A, B >0$. Then, \cref{Lemma:variantofTheorem2} says that for $n$ sufficiently large, 
$$
\left\|\hat{f}_n-f_0\right\|_{\mathcal{L}^2} \leq C\left(n^{rK_B - \frac{r}{r+d}(1/2 - K_B)}\log^4(n) + \sqrt{\mu_n+\theta_n} \right),
$$
with probability greater than 
\begin{equation*}
    \begin{split}
       & 1-e^{-\delta}-2 \log (n)\left[\frac{n \beta(a)}{a}+2 P\left(\max _{t \in\{1, \ldots, n\}} m_n\left(\boldsymbol{Z}_t\right) \geq 8 B_n\right)\right] \\
        & \geq 1-e^{-\delta}-2 \log (n)\left[\frac{n \beta(a)}{a}+2 P\left(\max _{t \in\{1, \ldots, n\}} Y_t\geq 2B_n\right)\right] \\
        & = 1 - e^{-C_\delta n^{2rK_B+2\left(\frac{d}{r+d}\right)\left(1 / 2-K_B\right)}\log^6(n)} - 2 \log (n)\left[\frac{n C_\beta^{'} e^{-C_\beta \log^2(n)})}{\log^2(n)}+2 P\left(\max _{t \in\{1, \ldots, n\}} Y_t\geq 2B_n\right)\right]  \\
        & = 1 - e^{-C_\delta n^{2rK_B+2\left(\frac{d}{r+d}\right)\left(1 / 2-K_B\right)}\log^6(n)} - 2 \log (n)\left[\frac{C_\beta^{'} n^{1-C_\beta\log n}}{\log^2(n)}+2 P\left(\max _{t \in\{1, \ldots, n\}} Y_t\geq 2B_n\right)\right],
    \end{split}
\end{equation*}
where the first inequality is due to the fact implied in \cref{Eq:maxMY}
\end{proof}

\subsection{Proof of \cref{Prop:consistency}}

\begin{proof}
    From results in \cref{Theorem:esterrorDNN}, we can define the event $J_n :=\{\| \hat{f}_n-f_0 \|_{\mathcal{L}^2}\leq \epsilon_n\}$, s.t., $P(J_n)\to 1$ as $n\to\infty$. Then, we can further define the event $G_n:= J_n \cap \mathcal{E}_n$. Under C1, it is clear than $P(G_n^c)\leq P(J_n^c) + P(\mathcal{E}_n^c)\to 0$. Thus, $P(G_n) \to 1$. 

    On the event $G_n$, we define $g_n : = \hat{f}_n - f_0$ and attempt to show a upper bound for $M_{g_n} := \|g_n\|_{\infty}$. First, under $C3$ and $C1$, $g_n$ is Lipschitz continuous with the Lipschitz constant $\tilde{ \mathcal{L}}_n$ which depends on $\mathcal{L}_n$ and $\|f\|_{C^{k, \alpha}\left([-B_n,B_n]^d\right)}$. Then, we define $r := \frac{M_{g_n}}{2\tilde L_n}$, we have 
    $$|g_n(\bm{y})| \geq\left|g_n\left(\bm{y}^*\right)\right|-\tilde L_n\left\|\bm{y}-\bm{y}^*\right\| \geq M_{g_n}-\tilde L_n r=\frac{M_{g_n}}{2},$$
    for $\left\|\bm{y}-\bm{y}^*\right\| \leq r$, where $\bm{y}^* = \arg\max_{\bm{y}}|g_n(\bm{y})|$. Therefore $g_n(\bm{y})^2 \geq \frac{M_{g_n}^2}{4}$ for $\forall~\bm{y}$, s.t., $\left\|\bm{y}-\bm{y}^*\right\| \leq r$. Define $\rho := \min(r,2B_n\sqrt{d})$. By considering the joint region of a ball centered at $\bm{y}^*$ with radius $r$ and the domain $[-B_n, B_n]^d$, we have 
    $$
    \|g_n\|_{\mathcal{L}^2}^2 \geq \frac{M_{g_n}^2}{4} C_{\pi}d^{-d / 2} \rho^d.
    $$
    If $\rho = r$, we have $\|g_n\|_{\mathcal{L}^2}^2 \geq \frac{C_{\pi}}{2^{d+2} d^{d / 2}} \frac{M_{g_n}^{d+2}}{\tilde L_n^d}$, which implies that $M_{g_n} \leq 2 d^{\frac{d}{2(d+2)}}\left(\frac{\tilde L_n^d}{C_{\pi}}\right)^{\frac{1}{d+2}}\|g_n\|_{\mathcal{L}^2}^{\frac{2}{d+2}}$. If $\rho = 2B_n\sqrt{d}$, we have $M_{g_n} \leq \frac{2\|g_n\|_{\mathcal{L}^2}}{\sqrt{C_{\pi}}(2 B_n)^{d / 2}}$. Thus, we can conclude that 
    \begin{equation}\label{Eq:pointwise1}
        M_{g_n}\leq 2 d^{\frac{d}{2(d+2)}}\left(\frac{\tilde L_n^d}{C_{\pi}}\right)^{\frac{1}{d+2}}\|g_n\|_{\mathcal{L}^2}^{\frac{2}{d+2}} + \frac{2\|g_n\|_{\mathcal{L}^2}}{\sqrt{C_{\pi}}(2 B_n)^{d / 2}} := \Psi_n.
    \end{equation}
Under C3, $\Psi_n$ converges to 0. Thus, we can fix $\forall~\tau>0$, then $\exists~N$ s.t., $\Psi_n\leq \tau$ for all $n> N$. For these $n$, we have $\left\{\left\|\hat{f}_n-f_0\right\|_{\infty}>\tau\right\} \subseteq G_n^c$. Thus, we have 
$$P\left(\left\|\hat{f}_n-f_0\right\|_{\infty}>\tau\right) \leq P\left(G_n^c\right) \longrightarrow 0.$$
Therefore $\left\|\hat{f}_n-f_0\right\|_{\infty} \xrightarrow{p} 0$, which implies our desired results. 

\end{proof}

\subsection{Proof of \cref{Lemma:uniformconsistencyofCDF}}

\begin{proof}
 The proof is mostly the same as the proof of Lemma 2.1 of \cite{wu2024bootstrap}. In their case, a parametric mean model of the autoregressive regression is assumed, so their proof is based on the smoothness condition of the parametric mean model and the technique of \cite{boldin1983estimation}. In particular, one condition on which their proof relies is that the model parameters are consistent with the true parameters. In our case, we have a non-parametric mean model of the autoregressive regression, but the whole function $f_0$ can be estimated by $\hat{f}$ consistently revealed by \cref{Prop:consistency}. 
\end{proof}

\subsection{Proof of \cref{Theorem:validaityofQPI}}

\begin{proof}
  We show the proof of $J=2$ as an example. The proof of higher steps or one-step prediction can be written similarly. The proof is similar to the proof of Theorem 2.3 in \cite{wu2024bootstrap}. To simplify the notation, we take $f : = f_0$. First, by the tower property, we can show that $F_{Y_{n+2}|\bar{Y}_n}(y)$ is equivalent to:
\begin{equation}\label{truedis}
\begin{split}
    F_{Y_{n+2}|\bar{Y}_n}(y) &=  \mathbb{P}(Y_{n+2}\leq y|\bar{Y}_n)\\
      &=  \mathbb{P}(f(\bm{Y}_{n+1}) + \epsilon_{n+2}\leq y|\bar{Y}_n)\\
    & = \mathbb{P}\left(\epsilon_{n+2}\leq y - f(\bm{Y}_{n+1})\bigg\vert \bar{Y}_n\right)\\
    & = \mathbb{E}\left[\mathbb{P}\left(\epsilon_{n+2}\leq y - f([f(\bm{Y}_{n}) + \epsilon_{n+1}, Y_{n}, \ldots, Y_{n+2}])\bigg\vert\epsilon_{n+1},\bar{Y}_n\right)\bigg\vert \bar{Y}_n\right]\\
    & = \mathbb{E}\left[F_{\epsilon}\left(y - f([f(\bm{Y}_{n}) + \epsilon_{n+1}, Y_{n}, \ldots, Y_{n+2}])\right)\bigg\vert \bar{Y}_n\right]\\
    & = \mathbb{E}\left[F_{\epsilon}\left( \mathcal{L}(y,\bar{Y}_n,\epsilon_{n+1})   \right)\bigg\vert \bar{Y}_n\right];
\end{split}
\end{equation}
we use $\mathcal{L}(y,\bar{Y}_n,\epsilon_{n+1})$ to represent $y - f([f(\bm{Y}_{n}) + \epsilon_{n+1}, Y_{n}, \ldots, Y_{n+2}])$ to simplify notations. Similarly, we can analyze $F_{Y^*_{T+2}|\bar{Y}_n}(y)$, it has below equivalent expressions:
\begin{equation}
\begin{split}
    F_{Y^*_{n+2}|\bar{Y}_n}(y) &=  \mathbb{P}(Y^*_{n+2}\leq y|\bar{Y}_n)\\
    & = \mathbb{E}\left[\mathbb{P}\left(\hat{\epsilon}^*_{n+2}\leq \widehat{\mathcal{L}}(y, \bar{Y}_n,\hat{\epsilon}^*_{n+1})\bigg\vert \hat{\epsilon}^*_{n+1},\bar{Y}_n\right)\bigg\vert \bar{Y}_n\right]\\
    & = \mathbb{E}^*\left[\widehat{F}_{\epsilon}\left(\widehat{\mathcal{L}}(y, \bar{Y}_n,\hat{\epsilon}^*_{n+1})\right)\right],
\end{split}
\end{equation}
 where $\widehat{\mathcal{L}}(y, \bar{Y}_n,\hat{\epsilon}^*_{n+1})$ represents $y - \hat{f}([\hat{f}(\bm{Y}_{n}) + \hat{\epsilon}^*_{n+1}, Y_{n},\ldots,Y_{n+2}])$ and $\mathbb{E}^*(\cdot)$ represents the expectation in the bootstrap world, i.e., conditional on $\bar{Y}_n$. Thus, we hope to show:
 \begin{equation}\label{equivatheorem}
     \sup_{|y|\leq B_n}\bigg\vert \mathbb{E}\left[F_{\epsilon}\left( \mathcal{L}(y,\bar{Y}_n,\epsilon_{n+1})   \right)\bigg\vert \bar{Y}_n\right] - \mathbb{E}^*\left[\widehat{F}_{\epsilon}\left(\widehat{\mathcal{L}}(y, \bar{Y}_n,\hat{\epsilon}^*_{n+1})\right)\right] \bigg\vert \overset{p}{\to} 0. 
 \end{equation}
From here, we have $\mathbb{P}\left(\max _{t \in\{1, \ldots, n\}}\left|Y_t\right| \geq B_n\right)=0$ as $n\to\infty$ by A3. In addition, we have a relationship:
 \begin{equation}\label{decompose}
 \begin{split}
     &\mathbb{P}\left( \mathbb{E}^*\left[\widehat{F}_{\epsilon}\left(\widehat{\mathcal{L}}(y, \bar{Y}_n,\hat{\epsilon}^*_{n+1})\right)\right] - \mathbb{E}\left[F_{\epsilon}\left( \mathcal{L}(y,\bar{Y}_n,\epsilon_{n+1})   \right)\bigg\vert \bar{Y}_n\right] \bigg\vert > \epsilon\right) \\
     &\leq\mathbb{P}(\max_{t\in\{1,\ldots,n\}}|Y_{t}|> B_n) \\
     & + \mathbb{P}\left[(\max_{t\in\{1,\ldots,n\}}|Y_{t}|\leq B_n )\bigcap \right. \\
     & \left. \left(\sup_{|y|\leq B_n}\bigg\vert \mathbb{E}^*\left[\widehat{F}_{\epsilon}\left(\widehat{\mathcal{L}}(y, \bar{Y}_n,\hat{\epsilon}^*_{n+1})\right)\right] -\mathbb{E}\left[F_{\epsilon}\left( \mathcal{L}(y,\bar{Y}_n,\epsilon_{n+1})   \right)\bigg\vert \bar{Y}_n\right] \bigg\vert>\epsilon\right) \right].
 \end{split}
 \end{equation}
Thus, to verify \cref{equivatheorem}, we just need to show that the second term of the r.h.s. of \cref{decompose} converges to 0. Also, it is enough for us to analyze the asymptotic probability of the below expression:
\begin{equation}\label{start}
    \sup_{\max(|y|,\max_{t\in\{1,\ldots,n\}}|y_{t}|)\leq B_n}\bigg\vert \mathbb{E}^*\left[\widehat{F}_{\epsilon}(\widehat{\mathcal{L}}(y, \bar{Y}_n,\hat{\epsilon}^*_{n+1})) \right] - \mathbb{E}\left[F_{\epsilon}\left( \mathcal{L}(y,\bar{Y}_n,\epsilon_{n+1})   \right)\right] \bigg\vert>\epsilon.
\end{equation}
Decompose the l.h.s. of \cref{start} as:
\begin{equation}\label{intermediate}
\begin{split}
    &\sup_{\max(|y|,\max_{t\in\{1,\ldots,n\}}|y_{t}|)\leq B_n}\bigg\vert \mathbb{E}^*\left[\widehat{F}_{\epsilon}(\widehat{\mathcal{L}}(y, \bar{Y}_n,\hat{\epsilon}^*_{n+1})) \right] - \mathbb{E}\left[F_{\epsilon}\left( \mathcal{L}(y,\bar{Y}_n,\epsilon_{n+1})   \right)\right] \bigg\vert \leq \\
    & \sup_{\max(|y|,\max_{t\in\{1,\ldots,n\}}|y_{t}|)\leq B_n}\bigg\vert \mathbb{E}^*\left[\widehat{F}_{\epsilon}(\widehat{\mathcal{L}}(y, \bar{Y}_n,\hat{\epsilon}^*_{n+1})) \right]- \mathbb{E}^*\left[F_{\epsilon}(\widehat{\mathcal{L}}(y, \bar{Y}_n,\hat{\epsilon}^*_{n+1})) \right]   \bigg\vert \\
    &+ \sup_{\max(|y|,\max_{t\in\{1,\ldots,n\}}|y_{t}|)\leq B_n}\bigg\vert  \mathbb{E}^*\left[F_{\epsilon}(\widehat{\mathcal{L}}(y, \bar{Y}_n,\hat{\epsilon}^*_{n+1})) \right] -  \mathbb{E}\left[F_{\epsilon}\left( \mathcal{L}(y,\bar{Y}_n,\epsilon_{n+1})   \right)\right]  \bigg\vert.
\end{split}
\end{equation}
Then, we analyze two terms on the r.h.s. of \cref{intermediate} separately. To simplify the notation, we denote $\max(|y|,\max_{t\in\{1,\ldots,n\}}|y_{t}|)\leq B_n$ as $\tilde{\bm{y}} \leq B_n$.  For the first term, we have:
\begin{equation}\label{part1}
\begin{split}
    &\sup_{\tilde{\bm{y}} \leq B_n}\bigg\vert \mathbb{E}^*\left[\widehat{F}_{\epsilon}(\widehat{\mathcal{L}}(y, \bar{Y}_n,\hat{\epsilon}^*_{n+1})) \right]- \mathbb{E}^*\left[F_{\epsilon}(\widehat{\mathcal{L}}(y, \bar{Y}_n,\hat{\epsilon}^*_{n+1})) \right]   \bigg\vert \\
    & \leq \sup_{\tilde{\bm{y}} \leq B_n}\mathbb{E}^*\bigg\vert \widehat{F}_{\epsilon}(\widehat{\mathcal{L}}(y, \bar{Y}_n,\hat{\epsilon}^*_{n+1})) - F_{\epsilon}(\widehat{\mathcal{L}}(y, \bar{Y}_n,\hat{\epsilon}^*_{n+1}))     \bigg\vert\\
    & \leq \sup_{\tilde{\bm{y}} \leq B_n} \bigg\vert \widehat{F}_{\epsilon}(\widehat{\mathcal{L}}(y, \bar{Y}_n,\hat{\epsilon}^*_{n+1})) - F_{\epsilon}(\widehat{\mathcal{L}}(y, \bar{Y}_n,\hat{\epsilon}^*_{n+1}))     \bigg\vert\overset{p}{\to} 0,~\text{under \cref{Lemma:uniformconsistencyofCDF}}.
\end{split}
\end{equation}
For the second term on the r.h.s. of \cref{intermediate}, we have:
\begin{equation}\label{part2}
\begin{split}
    &\sup_{\tilde{\bm{y}} \leq B_n}\bigg\vert  \mathbb{E}^*\left[F_{\epsilon}(\widehat{\mathcal{L}}(y, \bar{Y}_n,\hat{\epsilon}^*_{n+1})) \right] -  \mathbb{E}\left[F_{\epsilon}\left( \mathcal{L}(y,\bar{Y}_n,\epsilon_{n+1})   \right)\right]  \bigg\vert. \\
    &\leq \sup_{\tilde{\bm{y}} \leq B_n}\bigg\vert \frac{1}{n}\sum_{i=1}^{n}F_{\epsilon}(\widehat{\mathcal{L}}(y, \bar{Y}_n,\hat{\epsilon}_{i})) -  \frac{1}{n}\sum_{i=1}^{n}F_{\epsilon}(\mathcal{L}(y, \bar{Y}_n,\epsilon_{i}))   \bigg\vert\\
    &+ \sup_{\tilde{\bm{y}} \leq B_n}\bigg\vert \frac{1}{n}\sum_{i=1}^{n}F_{\epsilon}(\mathcal{L}(y, \bar{Y}_n,\epsilon_{i})) - \mathbb{E}\left[F_{\epsilon}\left( \mathcal{L}(y,\bar{Y}_n,\epsilon_{n+1})   \right)\right]      \bigg\vert. 
\end{split}
\end{equation}
First, we show one complementary result:
\begin{equation}\label{epandhatep}
\begin{split}
    &\mathbb{P}\left(\max_{i=1,\ldots,n}\bigg\vert\epsilon_i - \hat{\epsilon}_i\bigg\vert>\epsilon\right)\\
    & = \mathbb{P}\left(\max_{i=1,\ldots,n}\bigg\vert Y_i - f(\bm{Y}_i) - Y_i + \hat{f}(\bm{Y}_i)          \bigg\vert>\epsilon\right)\\
    & \leq \mathbb{P}\left(\max_{i=1,\ldots,n}|Y_{i-1}|>B_n\right) \\
    &+ \mathbb{P}\left(  \left(\max_{i=1,\ldots,n}|Y_{i-1}|<B_n\right) \bigcap \left(\max_{i=1,\ldots,n}\bigg\vert  \hat{f}(\bm{Y}_i) - f(\bm{Y}_i)\bigg\vert>\epsilon\right) \right)\\
    & \to 0,~\text{under \cref{Prop:consistency}.}
\end{split}
\end{equation}
We further consider two terms on the r.h.s. of \cref{part2} separately. For the first term, by Taylor expansion, we have:
\begin{equation}\label{part2part1}
\begin{split}
    &\sup_{\tilde{\bm{y}} \leq B_n}\bigg\vert \frac{1}{n}\sum_{i=1}^{n}F_{\epsilon}(\widehat{\mathcal{L}}(y, \bar{Y}_n,\hat{\epsilon}_{i})) -  \frac{1}{n}\sum_{i=1}^{n}F_{\epsilon}(\mathcal{L}(y, \bar{Y}_n,\epsilon_{i}))   \bigg\vert\\
    & = \sup_{\tilde{\bm{y}} \leq B_n}\bigg\vert \frac{1}{n}\sum_{i=1}^{n}\left( F_{\epsilon}(\mathcal{L}(y, \bar{Y}_n,\epsilon_{i})) + p_{\epsilon}(o_i)(\widehat{\mathcal{L}}(y,\bar{Y}_n,\hat{\epsilon}_i)\right.\\
    &\left. - \mathcal{L}(y,\bar{Y}_n,\epsilon_i))  \right)  -  \frac{1}{n}\sum_{i=1}^{n}F_{\epsilon}(\mathcal{L}(y,\bar{Y}_n,\epsilon_i)) \bigg\vert\\
    & = \sup_{\tilde{\bm{y}} \leq B_n}\bigg\vert \frac{1}{n}\sum_{i=1}^{n} p_{\epsilon}(o_i)(\widehat{\mathcal{L}}(y,\bar{Y}_n,\hat{\epsilon}_i) - \mathcal{L}(y,\bar{Y}_n,\epsilon_i))     \bigg\vert \\
    & \leq \sup_{\tilde{\bm{y}} \leq B_n} \frac{1}{n}\sum_{i=1}^{n}\bigg\vert  p_{\epsilon}(o_i)(\widehat{\mathcal{L}}(y,\bar{Y}_n,\hat{\epsilon}_i) - \mathcal{L}(y,\bar{Y}_n,\epsilon_i))\bigg\vert  \\ 
    &\leq \sup_{\tilde{\bm{y}} \leq B_n} \sup_{z}|p_{\epsilon}(z)|\cdot\frac{1}{n}\sum_{i=1}^{n}\bigg\vert  \widehat{\mathcal{L}}(y,\bar{Y}_n,\hat{\epsilon}_i) - \mathcal{L}(y,\bar{Y}_n,\epsilon_i)\bigg\vert\\
    & \leq  \sup_{\tilde{\bm{y}} \leq B_n} C\cdot\frac{1}{n}\sum_{i=1}^{n}\bigg\vert  \widehat{\mathcal{L}}(y,\bar{Y}_n,\hat{\epsilon}_i) - \mathcal{L}(y,\bar{Y}_n,\epsilon_i)    \bigg\vert~\text{(under A4)}\\
    & \leq \sup_{\tilde{\bm{y}} \leq B_n,j\in\{1,\ldots,n\}} C\cdot\bigg\vert  \widehat{\mathcal{L}}(y,\bar{Y}_n,\hat{\epsilon}_j) - \mathcal{L}(y,\bar{Y}_n,\epsilon_j) \bigg\vert,
\end{split} 
\end{equation}
where $o_i$ is some point between $\widehat{\mathcal{L}}(y, \bar{Y}_n,\hat{\epsilon}_{i})$ and $\mathcal{L}(y,\bar{Y}_n,\epsilon_i)$. From \cref{Prop:consistency} and the smoothness order of $f_0$ is larger or equal than 1, we have \cref{part2part1} converges to 0 in probability. To see more details, recall that $\widehat{\mathcal{L}}(y,\bar{Y}_n,\hat{\epsilon}_j) - \mathcal{L}(y,\bar{Y}_n,\epsilon_j) = f([f(\bm{Y}_{n}) + \epsilon_{n+1}, Y_{n}, \ldots, Y_{n+2}]) - \hat{f}([\hat{f}(\bm{Y}_{n}) + \hat{\epsilon}_{n+1}, Y_{n},\ldots,Y_{n+2}])  : = f(a) - \hat{f}(\hat{a})$. We can decompose this term as $f(a) - f(\hat{a}) + f(\hat{a})- \hat{f}(\hat{a})$. From here, relying on the Lipschitz continuous property and \cref{epandhatep}, we can show \cref{part2part1} converges to 0 in probability. For the second term on the r.h.s. of \cref{part2}, by the uniform law of large numbers, we have: 
 \begin{equation}
     \sup_{\tilde{\bm{y}} \leq B_n}\bigg\vert \frac{1}{n}\sum_{i=1}^{n}F_{\epsilon}(\mathcal{L}(y,\bar{Y}_n,\epsilon_i)) - \mathbb{E}\left[F_{\epsilon}\left( \mathcal{L}(y,\bar{Y}_n,\epsilon_{n+1})   \right)\right]      \bigg\vert \overset{p}{\to} 0. 
 \end{equation}
 Combine all pieces, \cref{start} converges to 0 in probability, which implies \cref{Eq:validaityofQPI} in \cref{Theorem:validaityofQPI}. To show the last statement \cref{Eq:validaityofQPI2}, we can rely on the Glivenko–Cantelli theorem and triangle inequality. 
\end{proof}

\subsection{Proof of \cref{Lemma:consistenctres}}

\begin{proof}
    To prove the consistency of point predictions with any quantile level, it is just an application of Lemma 1.2.1 of \cite{politis1999subsampling} under the results from \cref{Theorem:validaityofQPI} and A5. Moreover, the QPI will be asymptotically valid since the two endpoints of QPI are just two point predictions corresponding with specific quantile levels.

    To show the consistency of optimal $L_2$ point prediction, we consider
    $$
  \Bigg| \int_{-B_n}^{B_n}  yd\widehat{F}_{Y^*_{n+j}|\bar{Y}_n}(y)-  \int_{-B_n}^{B_n}  ydF_{Y_{n+j}|\bar{Y}_n}(y)\Bigg|.
    $$
    To simply the notation through this proof, we denote $\widehat{F}_{Y^*_{n+j}|\bar{Y}_n}(y)$ and $F_{Y_{n+j}|\bar{Y}_n}(y)$ as $\widehat{F}$ and $F$, respectively. Using integration by parts, we have
 \begin{equation*}
     \begin{split}
         &\Bigg| \int_{-B_n}^{B_n}  yd\widehat{F}_{Y^*_{n+j}|\bar{Y}_n}(y)-  \int_{-B_n}^{B_n}  ydF_{Y_{n+j}|\bar{Y}_n}(y)\Bigg| \\
         & = \Bigg| y\widehat{F}(y)\big|_{-B_n}^{B_n} - \int_{-B_n}^{B_n}\widehat{F}(y)dy -  yF(y)\big|_{-B_n}^{B_n} + \int_{-B_n}^{B_n}F(y)dy \Bigg|\\
         &= \Bigg| B_n(\widehat{F}(B_n) - {F}(B_n)) + B_n({F}(-B_n) -\widehat{F}(-B_n))   + \int_{-B_n}^{B_n}(F(y) - \widehat{F}(y))dy \Bigg| \\
         &\leq 4B_n \sup _{|y| \leq B_n}|\widehat{F}_{Y^*_{n+j}|\bar{Y}_n}(y)- F_{Y_{n+j}|\bar{Y}_n}(y)| = op(1).
     \end{split}
 \end{equation*}
 The last equality is due to A6. Thus, we have the consistency of the optimal $L_2$ point prediction. 
\end{proof}

\subsection{Proof of \cref{Lemma:propertyofbootstrapseries}}

\begin{proof}
    To show the geometric ergodicity of the forward bootstrap series, we just need to verify that the conditions E1 and E2 are still satisfied for $(Y_1,\ldots Y_n)\in \Omega_n$. By the definition of $ \Omega_n$ and $\hat{f}_n \in \mathcal{N}_n$, we can take $C = 2B_n$ and then $\exists \lambda<1$ s.t., E1 is satisfied for $\hat{f}_n$. For E2, we first consider the uncentered residuals, i.e., $\hat\epsilon_t := Y_t - \hat{f}(\bm{Y}_{t-1})$ for $t = 1, \ldots, n$. The application of centered residuals in \cref{alg:PPI} does not influence the asymptotic results by Chebyshev’s inequality and the boundedness of $\hat{f}_n$ and $Y_t$ given  $\Omega_n$. Then, E2 is met when we consider the kernel estimation of the probability density function $\hat{p}_\epsilon$. Lastly, the geometric ergodicity property of the forward bootstrap time series $\{Y^*_t\}_{t=1}^n$ can be proved in a similar manner to show the geometric ergodicity property of the original series; see Theorem 2 of \cite{franke2002properties} for example. 

    To show the stationary distribution of the forward bootstrap series $P^*_{\pi}$ converges to $P_{\pi}$ in probability, we need to verify the six conditions listed in assumption 4 of \cite{franke2002properties} whose variants are listed as follows in our context:
    \begin{itemize}
        \item [(1)] $|
    \hat{f}_n(\bm{y})| \leq \lambda \max \left\{\left|y_1\right|, \ldots,\left|y_d\right|\right\}+C$, for a positive number $\lambda<1$ and a constant C.
        \item [(2)] $\sup _{\bm{y} \in \mathcal{Y}_n}\{|\hat{f}_n(\bm{y})-f(\bm{y})|\}=o_P(1)$ for an appropriate sequence of sets $\mathcal{X}_n \subseteq \mathbb{R}^d$ with $\cdots \subseteq \mathcal{Y}_n \subseteq \mathcal{Y}_{n+1} \subseteq \cdots$ and $P\left(Y_{t} \notin\mathcal{Y}_n\right)=o_P(1)$ as $n\to\infty$.
        \item [(3)]  $\left\|\hat{p}_{\epsilon}-p_{\epsilon}\right\|_{\infty} = o_P(1)$.
        \item[(4)] $\int\left|\hat{p}_{\epsilon}(x)-p_{\epsilon}(x)\right| \mathrm{d} x = o_P(1)$.
        \item[(5)]  $\int|\epsilon| \hat{p}_{\epsilon}(\epsilon) \mathrm{d} \epsilon = O_P(1)$.
        \item[(6)] $\int\left|p_{\epsilon}(z)-p_{\epsilon}(z+c)\right| \mathrm{d} z=O(c)$.
     \end{itemize}
Then, our desired result can be proved in a similar way as the proof of Theorem 3 of \cite{franke2002properties}. The difference is that $P^*_{\pi}$ converges to $P_{\pi}$ with probability tending to 1 in our case. We should mention that the rates of conditions (3), (4), (5) converge to 0 do not matter to prove the consistency of $P^*_{\pi}$. This can be seen from the proof of Theorem 3 in \cite{franke2002properties}, where the so-called decoupling technique is used. In particular, they start both series from a common point $Y_1 = Y_1^* = y_0$ and then show the probability $Y_k \neq Y_k^*$ increasing slowly for $k<n$. By taking an appropriate $k$, they can show that $P^*_{\pi}$ converges to $P_{\pi}$. 

For conditions (1) and (2), they are implied by the aforementioned analysis and \cref{Prop:consistency}, by taking $\mathcal{X}_n = [-B_n,B_n]^d$ and leveraging the finite covering number property of a compact set. For condition (3), we consider 
$$
\|\hat{p}_{\epsilon}(x)-p_{\epsilon}(x)\|_{\infty} \leq \|\hat{p}_{\epsilon}(x)-\tilde{p}_{\epsilon}(x)\|_{\infty}+\|\tilde{p}_{\epsilon}(x)-p_{\epsilon}(x)\|_{\infty},
$$
where $\hat{p}_\epsilon$ is defined in \cref{Eq:gaussianKernel} and  $\tilde{p}_{\epsilon}(x)=\frac{1}{n} \sum_{t=1}^n \frac{1}{h} K\left(\frac{x-\epsilon_t}{h}\right)$; $\epsilon_t$ are true residuals and $K$ is the same Gaussian kernel applied in \cref{Eq:gaussianKernel}. By the standard non-parametric density estimation result, $\|\tilde{p}_{\epsilon}(x)-p_{\epsilon}(x)\|_{\infty} = o_P(1)$, \cite{li2007nonparametric,Silverman1986}. For the term $\|\hat{p}_{\epsilon}(x)-\tilde{p}_{\epsilon}(x)\|_{\infty}$, we consider 
\begin{equation*}
    \begin{split}
        \hat{p}_{\epsilon}(x)-\tilde{p}_{\epsilon}(x)&=\frac{1}{n} \sum_{t=1}^n\left[\frac{1}{h} K\left(\frac{x-\hat{\epsilon}_t}{h}\right)-\frac{1}{h} K\left(\frac{x-\epsilon_t}{h}\right)\right]\\
        &\leq \frac{L}{(n) h^2} \sum_{t=1}^n\left|\hat{\epsilon}_t-\epsilon_t\right|,
    \end{split}
\end{equation*}
where the last inequality comes from the Lipschitz continuity of the Gaussian kernel. Consider $\frac{1}{n} \sum_{t=1}^n\left|\hat{\epsilon}_t-\epsilon_t\right|$, we can define $U_t$ s.t., $\widehat{Q}(U_t) = \hat{\epsilon}_t$, where $\widehat{Q}(\cdot)$ is the quantile function corresponding with $\hat{p}_\epsilon$. Without loss of generality, we can define $\epsilon_t = Q(U_t)$ where $Q$ is the true quantile function for the innovation term. Then, we have $\frac{1}{n} \sum_{t=1}^n\left|\hat{\epsilon}_t-\epsilon_t\right| \leq \sup_u|\widehat{Q}(u) - Q(u)|$. By \cref{Lemma:uniformconsistencyofCDF}, i.e., $\sup _x\left|\widehat{F}_\epsilon(x)-F_\epsilon(x)\right| \xrightarrow{p} 0,$ we can show that $\sup_u|\widehat{Q}(u) - Q(u)|$ is $o_P(1)$ under the further condition of the smoothness of the innovation distribution. With an appropriate $h \to 0$ slowly enough, we have  $\hat{p}_{\epsilon}(x)-\tilde{p}_{\epsilon}(x)\leq \frac{L}{(n) h^2} \sum_{t=1}^n\left|\hat{\epsilon}_t-\epsilon_t\right| = op(1)$ for $\forall~x$. Condition (3) is thus obtained.  

For condition (4), it is easily obtained from condition (3) being satisfied. For condition (5), it is also easy to verify with the kernel density estimator. Condition (6) comes from the assumption A4 easily. Finally, since $\mathbb{P}((Y_1,\ldots Y_n)\in \Omega_n)\to1$ as $n\to\infty$, we get our desired result. 
\end{proof}
\end{document}